\def\eqref#1{equation~\ref{#1}}
\def\1{\bm{1}}
\def \rvalpha{{\bm{\alpha}}}
\def \rvbeta{{\bm{\beta}}}
\def\rvb{{\mathbf{b}}}
\def\rve{{\mathbf{e}}}
\def\rvf{{\mathbf{f}}}
\def\rvg{{\mathbf{g}}}
\def\rvu{{\mathbf{i}}}
\def\rvr{{\mathbf{r}}}
\def\rvt{{\mathbf{t}}}
\def\rvu{{\mathbf{u}}}
\def\rvw{{\mathbf{w}}}
\def\rvx{{\mathbf{x}}}
\def\rvz{{\mathbf{z}}}
\def\vc{{\bm{c}}}
\def\vg{{\bm{g}}}
\def\vw{{\bm{w}}}
\DeclareMathAlphabet{\mathsfit}{\encodingdefault}{\sfdefault}{m}{sl}
\SetMathAlphabet{\mathsfit}{bold}{\encodingdefault}{\sfdefault}{bx}{n}
\newcommand{\E}{\mathbb{E}}
\newcommand{\Var}{\mathrm{Var}}
\newcommand{\Cov}{\mathrm{Cov}}
\DeclareMathOperator{\Tr}{Tr}
\title{Improving Transformation Invariance in \\ Contrastive Representation Learning}
\author{Adam Foster$^*$, Rattana Pukdee\thanks{Equal contribution} \ \& Tom Rainforth \\
Department of Statistics\\
University of Oxford\\
%Oxford, UK \\
\texttt{\{adam.foster,rainforth\}@stats.ox.ac.uk}
}
\begin{document}

\maketitle

\begin{abstract}
We propose methods to strengthen the invariance properties of representations obtained by contrastive learning. 
While existing approaches implicitly induce a degree of invariance as representations are learned, we look to more directly enforce invariance in the encoding process.
To this end, we first introduce a training objective for contrastive learning that uses a novel regularizer to control how the representation changes under transformation.
We show that representations trained with this objective perform better on downstream tasks and are more robust to the introduction of nuisance transformations at test time. 
Second, we propose a change to how test time representations are generated by introducing a feature averaging approach that combines encodings from multiple transformations of the original input, finding that this leads to across the board performance gains.
Finally, we introduce the novel Spirograph dataset to explore our ideas in the context of a differentiable generative process with multiple downstream tasks, showing that our techniques for learning invariance are highly beneficial.

\end{abstract}

% !TeX root = iclr2021_conference.tex

\section{Introduction}
\label{sec:intro}

Learning meaningful representations of data is a central endeavour in artificial intelligence. Such representations should retain important information about the original input whilst using fewer bits to store it \citep{van2009dimensionality,gregor2016towards}. Semantically meaningful representations may discard a great deal of information about the input, whilst capturing what is relevant. Knowing what to discard, as well as what to keep, is key to obtaining powerful representations. 

By defining transformations that are believed \emph{a priori} to distort the original without altering semantic features of interest, we can learn representations that are (approximately) invariant to these transformations \citep{hadsell2006dimensionality}. Such representations may be more efficient and more generalizable than lossless encodings. Whilst less effective for reconstruction, these representations are useful in many downstream tasks that relate only to the semantic features of the input. Representation invariance is also a critically important task in of itself: it can lead to improved robustness and remove noise \citep{du2020learning}, afford fairness in downstream predictions \citep{jaiswal2020invariant}, and enhance interpretability \citep{xu2018deeper}.

Contrastive learning is a recent and highly successful self-supervized approach to representation learning that has achieved state-of-the-art performance in tasks that rely on semantic features, rather than exact reconstruction \citep{oord2018representation,hjelm2018learning,bachman2019learning,he2019momentum}. These methods learn to match two different transformations of the same object in representation space, distinguishing them from contrasts that are representations of other objects. 

The objective functions used for contrastive learning encourage representations to remain similar under transformation, whilst simultaneously requiring different inputs to be well spread out in representation space \citep{wang2020understanding}.
As such, the choice of transformations is key to their success~\citep{chen2020simple}.  Typical choices include random cropping and colour distortion. 

However, representations are compared using a similarity function that can be maximized even 
for representations that are far apart, meaning that the invariance learned is relatively weak.  
Unfortunately, directly changing the similarity measure hampers the algorithm \citep{wu2018unsupervised,chen2020simple}. We therefore investigate methods to improve contrastive representations by explicitly encouraging stronger invariance to the set of transformations, without changing the core self-supervized objective;
we look to extract more information about how representations are changing with respect to transformation, and use this to direct the encoder towards greater invariance.

To this end, we first develop a gradient regularization term that, when included in the training loss, forces the encoder to learn a representation function that varies slowly with continuous transformations. This can be seen as \emph{constraining} the encoder to be approximately transformation invariant. 
We demonstrate empirically that while the parameters of the transformation can be recovered from   standard contrastive learning representations using just linear regression, this is no longer the case when our regularization is used.
Moreover, our representations perform better on downstream tasks and are robust to the introduction of nuisance transformations at test time.

Test representations are conventionally produced using untransformed inputs \citep{hjelm2018learning,kolesnikov2019revisiting}, but this fails to combine information from different transformations and views of the object, or to emulate settings in which transformation noise cannot simply be removed at test time.
Our second key proposal is to instead create test time representations by feature averaging over multiple, differently transformed, inputs to address these concerns and to more directly impose invariance. 
We show theoretically that this leads to improved performance under linear evaluation protocols, further confirming this result empirically.

We evaluate our approaches first on CIFAR-10 and CIFAR-100 \citep{krizhevsky2009learning}, using transformations appropriate to natural images and evaluating on a downstream classification task. To validate that our ideas transfer to other settings, and to use our gradient regularizer within a fully differentiable generative process, we further introduce a new synthetic dataset called Spirograph. This provides a greater variety of downstream regression tasks, and allows us to explore the interplay between nuisance transformations and generative factors of interest. We confirm that using our regularizer during training and our feature averaging at test time both improve performance in terms of transformation invariance, downstream tasks, and robustness to train--test distributional shift.

In summary, the contributions of this paper are as follows:
\begin{itemize}
	\item We derive a novel contrastive learning objective that leads to more  invariant representations.
	\item We propose test time feature averaging to enforce further invariance.
	\item We introduce the Spirograph dataset.
	\item We show empirically that our approaches lead to more invariant representations and achieve state-of-the-art performance for existing downstream task benchmarks. 
\end{itemize}
% !TeX root = iclr2021_conference.tex

\section{Probabilistic formulation of contrastive learning}
\label{sec:background}
The goal of unsupervized representation learning is to encode high-dimensional data, such as images, retaining information that may be pertinent to downstream tasks and discarding information that is not. To formalize this, we consider a data distribution $p(\rvx)$ on $\mathcal{X}$ and an encoder $\rvf_\theta: \mathcal{X} \to \mathcal{Z}$ which is a parametrized function mapping from data space to representation space.

Contrastive learning is a self-supervized approach to representation learning that learns to make representations of differently transformed versions of the same input more similar than representations of other inputs.
Of central importance is the set of transformations, also called augmentations \citep{chen2020simple} or views \citep{tian2019contrastive}, used to distort the data input $\rvx$. In the common application of computer vision, it is typical to include resized cropping; brightness, contrast, saturation and hue distortion; greyscale conversion; and horizontal flipping. 
We will later introduce the Spirograph dataset which uses quite different transformations. 
In general, transformations are assumed to change the input only cosmetically, so all semantic features such as the class label are preserved; the set of transformations indicates changes which can be safely ignored by the encoder. 

Formally, we consider a transformation set $\mathcal{T} \subseteq \{\rvt:\mathcal{X} \to \mathcal{X} \}$ and a probability distribution $p(\rvt)$ on this set. A representation $\rvz$ of $\rvx$ is obtained by applying a random transformation $\rvt$ to $\rvx$ and then encoding the result using $\rvf_\theta$. Therefore, we do not have one representation of $\rvx$, but an implicit distribution $p(\rvz|\rvx)$. 
A sample of $p(\rvz|\rvx)$ is obtained by sampling $\rvt \sim p(\rvt)$ and setting $\rvz = \rvf_\theta(\rvt(\rvx))$. 

If the encoder is to discard irrelevant information, we would expect different encodings of $\rvx$ formed with different transformations $\rvt$ to be close in representation space. Altering the transformation should not lead to big changes in the representations of the same input. In other words, the distribution $p(\rvz|\rvx)$ should place most probability mass in a small region. 
However, this does not provide a sufficient training signal for the encoder $\rvf_\theta$ as it fails to penalize trivial solutions in which all $\rvx$ are mapped to the same $\rvz$. 
To preserve meaningful information about the input $\rvx$ whilst discarding purely cosmetic features, we should require $p(\rvz|\rvx)$ to be focused around a single $\rvz$ whilst \emph{simultaneously} requiring the representations of different inputs not to be close. That is, the marginal $p(\rvz) = \E_{p(\rvx)}[p(\rvz|\rvx)]$ should distribute probability mass over representation space. 

This intuition is directly reflected in contrastive learning. Most state-of-the-art contrastive learning methods utilize the InfoNCE objective \citep{oord2018representation}, or close variants of it \citep{chen2020simple}. InfoNCE uses a batch $\rvx_1, ..., \rvx_K$ of inputs, from which we form pairs of representations $(\rvz_1, \rvz_1'), ..., (\rvz_K, \rvz_K')$ by applying two random transformations to each input followed by the encoder $\rvf_\theta$. In probabilistic language
\begin{align}
	\rvx_i &\sim p(\rvx) \text{ for } i=1,...,K \\
	\rvz_i,\rvz_i' &\sim p(\rvz|\rvx=\rvx_i) \text{ conditionally independently given } \rvx_i, \text{ for } i=1,...,K,
\end{align}
such that $\rvz_i,\rvz_i' = \rvf_\theta(\rvt(\rvx)), \rvf_\theta(\rvt'(\rvx))$ for i.i.d.\ transformations $\rvt,\rvt' \sim p(\rvt)$. Given a learnable similarity score $s_\phi:\mathcal{Z}\times \mathcal{Z} \to \mathbb{R}$, contrastive learning methods minimize the following loss
\begin{equation}
	\label{eq:infonce}
	\mathcal{L}(\theta,\phi) = -\frac{1}{K} \sum_{i=1}^K s_\phi(\rvz_i,\rvz_i') + \frac{1}{K} \sum_{i=1}^K \log \left( \sum_{j=1}^K \exp\left[ s_\phi(\rvz_i,\rvz_j') \right]\right).
\end{equation}
Written in this way, we see that the loss will be minimized when $s_\phi(\rvz_i,\rvz_i')$ is large, but $s_\phi(\rvz_i,\rvz'_j)$ is small for $i\ne j$. In other words, InfoNCE makes the two samples $\rvz_i,\rvz_i'$ of $p(\rvz|\rvx=\rvx_i)$ similar, whilst making samples $\rvz_i,\rvz'_j$ of $p(\rvz)$ dissimilar. This can also be understood through the lens of mutual information, for more details see Appendix~\ref{sec:mi}.

In practice, the similarity measure used generally takes the form \citep{chen2020simple}
\begin{equation}
	\label{eq:sim}
	s_\phi(\rvz,\rvz') = \frac{\vg_\phi(\rvz)^\top \vg_\phi(\rvz')}{\tau\|\vg_\phi(\rvz)\|_2\|\vg_\phi(\rvz')\|_2}
\end{equation}
where $\vg_\phi$ is a small neural network and $\tau$ is a temperature hyperparameter. If the encoder $\rvf_\theta$ is perfectly invariant to the transformations, then $\rvz_i = \rvz_i'$ and $s_\phi(\rvz_i,\rvz_i')$ will be maximal. However, there are many ways to maximize the InfoNCE objective without encouraging strong invariance in the encoder.\footnote{This is because the function $\vg_\phi$ is not an injection, so we may have $\vg_\phi(\rvz) = \vg_\phi(\rvz')$ but $\rvz\ne\rvz'$. \citet{johnson1984extensions} gives conditions under which a projection of this form will preserve approximate distances, in particular, the required projection dimension is much larger than the typical value 128. } 
In this paper, we show how we can learn stronger invariances, above and beyond what is learned through the above approach, and that this benefits downstream task performance.

% !TeX root = iclr2021_conference.tex

\section{Invariance by gradient regularization}
\label{sec:gradient}

Contrastive learning with InfoNCE can gently encourage invariance by maximizing $s_\phi(\rvz,\rvz')$, but does not provide a strong signal to
\emph{ensure} this invariance. 
Our first core contribution is to show how we can use gradient methods to directly regulate how the representation changes with the transformation and thus
ensure the desired invariance. 
The key underlying idea is to \textbf{differentiate the representation with respect to the transformation}, and then encourage this gradient to be small so that the representation changes slowly as the transformation is varied.

To formalize this, we begin by looking more closely at the transformations $\mathcal{T}$ which are used to define the distribution $p(\rvz|\rvx)$. 
Many transformations, such as brightness adjustment, are controlled by a \textit{transformation parameter}. We can include these parameters in our set-up by writing the transformation $\rvt$ as a map from both input space $\mathcal{X}$ and transformation parameter space $\mathcal{U}$, i.e. $\rvt: \mathcal{X} \times \mathcal{U} \to \mathcal{X}$. In this formulation, we sample a random transformation parameter from $\rvu \sim p(\rvu)$ which is a distribution on $\mathcal{U}$. A sample from $p(\rvz|\rvx)$ is then obtained by taking $\rvz=\rvf_\theta(\rvt(\rvx,\rvu))$, with $\rvt$ now regarded as a fixed function.

The advantage of this change of perspective is that it opens up additional ways to learn stronger invariance of the encoder. In particular, it may make sense to consider the gradient $\nabla_\rvu  \rvz$, which describes the rate of change of $\rvz$ with respect to the transformation. This only makes sense for some transformation parameters---we can differentiate with respect to the brightness scaling but not with respect to a horizontal flip. 

To separate out differentiable and non-differentiable parameters we write $\rvu = \rvalpha,\rvbeta$ where $\rvalpha$ are the parameters for which it makes sense to consider the derivative $\nabla_\rvalpha \rvz$. Intuitively, this gradient should be small to ensure that representations change only slowly as the transformation parameter $\rvalpha$ is varied. For clarity of exposition, and for implementation practicalities, it is important to consider gradients of a \textit{scalar} function, so we introduce an arbitrary direction vector $\rve\in\mathcal{Z}$ and define
\begin{align}
	F (\rvalpha,\ \rvbeta,\ \rvx,\ \rve) = \rve \cdot \frac{\rvf_\theta(\rvt(\rvx,\rvalpha,\rvbeta))}{\|\rvf_\theta(\rvt(\rvx,\rvalpha,\rvbeta))\|_2}
	\label{eq:fb-def}
\end{align}
so that $F : \mathcal{A} \times \mathcal{B} \times \mathcal{X} \times \mathcal{Z} \to \mathbb{R}$ calculates the scalar projection of the normalized representation $\rvz/\|\rvz\|_2$ in the $\rve$ direction.
To encourage an encoder that is invariant to changes in $\rvalpha$, we would like to minimize the \emph{expected conditional variance} of $F$ with respect to $\rvalpha$:
\begin{equation}
	\label{eq:conditional-variance}
	V = \E_{p(\rvx)p(\rvbeta)p(\rve)}\left[ \Var_{p(\rvalpha)} [F(\rvalpha,\rvbeta,\rvx,\rve) \mid \rvx, \rvbeta,\rve] \right],
\end{equation}
where we have exploited independence to write $p(\rvx,\rvbeta,\rve)=p(\rvx)p(\rvbeta)p(\rve)$.
Defining $V$ requires a distribution for $\rve$ to be specified. 
For this, we make components of $\rve$ independent Rademacher random variables, justification for which is included in Appendix~\ref{sec:app-method}.

A naive estimator of $V$ can be formed using a direct nested Monte Carlo estimator~\citep{rainforth2018nesting} of sample variances, which, including Bessel's correction, is given by
\begin{equation}
\label{eq:naive-sample-var}
V \approx \frac{1}{K} \sum_{i=1}^K \left(\frac{1}{L-1}\sum_{j=1}^{L}  F(\rvalpha_{ij},\rvbeta_i,\rvx_{i},\rve_i)^2  - \frac{1}{L(L-1)} \left[\sum_{k=1}^L F(\rvalpha_{ik},\rvbeta_i,\rvx_{i},\rve_i) \right]^2 \right)
\end{equation}
where $\rvx_i,\rvbeta_i,\rve_i\sim p(\rvx)p(\rvbeta)p(\rve)$ and $\rvalpha_{ij}\sim p(\rvalpha)$. However, this estimator requires $LK$ forward passes through the encoder $\rvf_\theta$ to evaluate. As an alternative to this computationally prohibitive approach, we consider a first-order approximation\footnote{We use the notation $a(x) = o(b(x))$ to mean $a(x)/b(x) \to 0$ as $x \to \infty$.} to $F$
\begin{equation}
	F(\rvalpha',\rvbeta,\rvx,\rve) - F(\rvalpha,\rvbeta,\rvx,\rve) = \nabla_\rvalpha F(\rvalpha,\rvbeta,\rvx,\rve)\cdot (\rvalpha' - \rvalpha) + o(\|\rvalpha' - \rvalpha\|)
\end{equation}
and the following alternative form for the conditional variance (see Appendix~\ref{sec:app-method} for a derivation) 
\begin{equation}
	\Var_{p(\rvalpha)}\left[ F(\rvalpha,\rvbeta,\rvx,\rve)  \mid \rvx, \rvbeta,\rve\right] = \tfrac{1}{2}\E_{p(\rvalpha)p(\rvalpha')}\left[(F(\rvalpha,\rvbeta,\rvx,\rve) - F(\rvalpha',\rvbeta,\rvx,\rve))^2 \mid \rvx, \rvbeta,\rve\right]
\end{equation}
Combining these two ideas, we have
\begin{align}
	V &= \E_{p(\rvx)p(\rvbeta)p(\rve)}\left[\tfrac{1}{2}\E_{p(\rvalpha)p(\rvalpha')}\left[(F(\rvalpha,\rvbeta,\rvx,\rve) - F(\rvalpha',\rvbeta,\rvx,\rve))^2 \mid \rvx, \rvbeta,\rve\right]\right] \label{eq:gp-var}\\
	&\approx \E_{p(\rvx)p(\rvbeta)p(\rve)}\left[\tfrac{1}{2} \E_{p(\rvalpha)p(\rvalpha')}\left[\left(\nabla_\rvalpha F(\rvalpha,\rvbeta,\rvx,\rve)\cdot (\rvalpha' - \rvalpha)\right)^2  \mid \rvx, \rvbeta,\rve\right]\right].
	\label{eq:gp}
\end{align}
Here we have an approximation of the conditional variance $V$ that uses gradient information. 
Including this as a regularizer within contrastive learning will encourage the encoder to reduce the magnitude of the conditional variance $V$, forcing the representation to change slowly as the transformation is varied and thus inducing approximate invariance to the transformations.

An unbiased estimator of equation~\ref{eq:gp} using a batch $\rvx_1, ..., \rvx_K$ is
\begin{equation}
\label{eq:penalty}
\hat{V}_\textup{regularizer} = \frac{1}{K} \sum_{i=1}^K \left(\frac{1}{2L}\sum_{j=1}^{L} \left[\nabla_\rvalpha F(\rvalpha_i,\rvbeta_i,\rvx_i,\rve_i)\cdot (\rvalpha'_{ij} - \rvalpha_i) \right]^2 \right)
\end{equation}
where $\rvx_i,\rvalpha_i,\rvbeta_i,\rve_i,\sim p(\rvx)p(\rvalpha)p(\rvbeta)p(\rve)$, $\rvalpha'_{ij}\sim p(\rvalpha)$. We can cheaply use a large number of samples for $\rvalpha'$ without having to take any additional forward passes through the encoder: we only require $K$ evaluations of $F$. 
Our final loss function is
\begin{equation}
\label{eq:full-loss}
\begin{split}
\mathcal{L}(\theta,\phi) = &-\frac{1}{K} \sum_{i=1}^K s_\phi(\rvz_i,\rvz_i') + \frac{1}{K} \sum_{i=1}^K \log \left( \sum_{j=1}^K \exp\left[ s_\phi(\rvz_i,\rvz_j') \right]\right) \\ &+  \frac{\lambda}{LK} \sum_{i=1}^K \sum_{j=1}^{L} \left[\nabla_\rvalpha F(\rvalpha_i,\rvbeta_i,\rvx_i,\rve_i)\cdot (\rvalpha'_{ij} - \rvalpha_i) \right]^2
\end{split}
\end{equation}
where $\lambda$ is a hyperparameter controlling the regularization strength. This loss does not require us to encode a larger number of differently transformed inputs. Instead, it uses the gradient at $(\rvx,\rvalpha,\rvbeta,\rve)$ to control properties of the encoder in a neighbourhood of $\rvalpha$. This can effectively reduce the representation gradient along the directions corresponding to many different transformations. This, in turn, creates an encoder that is approximately invariant to the transformations.

% !TeX root = iclr2021_conference.tex

\section{Better test time representations with feature averaging}
\label{sec:fa}

At test time, standard practice \citep{hjelm2018learning,kolesnikov2019revisiting} dictates that test representations be produced by applying the encoder to untransformed inputs (possibly using a central crop). It may be beneficial, however, to aggregate information from differently transformed versions of inputs to enforce invariance more directly, particularly when our previously introduced gradient regularization can only be applied to a subset of the transformation parameters. Furthermore, in real-world applications, it may not be possible to remove nuisance transformations at test time or, as in our Spirograph dataset, there may not be only one unique `untransformed' version of $\rvx$.

To this end, we propose combining representations from different transformations using feature averaging. This approach, akin to ensembling, does not directly use one encoding from the network $\rvf_\theta$ as a representation for an input $\rvx$. Instead, we sample transformation parameters $\rvalpha_1, ..., \rvalpha_M \sim p(\rvalpha), \rvbeta_1, ..., \rvbeta_M \sim p(\rvbeta)$ independently, and average the encodings of these differently transformed versions of $\rvx$ to give a single feature averaged representation
\begin{align}
\rvz^{(M)}(\rvx) = \frac{1}{M} \sum_{m=1}^{M} \rvf_\theta(\rvt(\rvx,\rvalpha_m,\rvbeta_m)).
\end{align}
Using $\rvz^{(M)}$ aggregates information about $\rvx$ by averaging over a range of possible transformations, thereby directly encouraging invariance.
Indeed, the resulting representation has lower conditional variance than the single-sample alternative, since
\begin{align}
	\Var_{p(\rvalpha_{1:M})p(\rvbeta_{1:M})}\left[ \rve \cdot \rvz^{(M)}(\rvx) \middle| \rvx, \rve\right] = \frac{1}{M}\Var_{p(\rvalpha_{1})p(\rvbeta_1)}\left[ \rve \cdot \rvz^{(1)}(\rvx) \middle| \rvx, \rve\right].
\end{align}
Further, unlike gradient regularization, this approach takes account of \emph{all} transformations, including those which we cannot differentiate with respect to (e.g. left--right flip). It therefore forms a natural test time counterpart to our training methodology to promote invariance.

We do not recommend using feature averaged representations during training. During training, we need a training signal to recognize similar and dissimilar representations, and feature averaging will weaken this training signal. Furthermore, the computational cost of additional encoder passes is modest when used once at test time, but more significant when used at every training iteration.

As a test time tool though, feature averaging is powerful. In Theorem~\ref{thm:convex} below, we show that for certain downstream tasks the feature averaged representation will always perform better than the single-sample \emph{transformed} alternative. 
The proof is presented in Appendix~\ref{sec:app-theory}.

% !TeX root = iclr2021_conference.tex

\begin{restatable}{theorem}{convex}
	Consider evaluation on a downstream task by fitting a linear classification model with softmax loss or a linear regression model with square error loss on with representations as features.
	For a fixed classifier or regressor and $M' \ge M$ we have
	\begin{equation}
	 \E_{p(\rvx, y)p(\rvalpha_{1:M'})p(\rvbeta_{1:M'})}\left[\ell\left(\rvz^{(M')}, y\right) \right] \le	\E_{p(\rvx, y)p(\rvalpha_{1:M})p(\rvbeta_{1:M})}\left[\ell\left(\rvz^{(M)}, y\right) \right].
	\end{equation}
	\label{thm:convex}
\end{restatable}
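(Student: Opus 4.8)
The plan is to reduce everything to Jensen's inequality applied to the convexity of the loss $\ell$ in the representation argument. First I would observe that the feature-averaged representation with $M'$ samples can be written as an average over $\binom{M'}{M}$ (or more simply, as a mixture over uniformly random size-$M$ subsets) of feature-averaged representations each using only $M$ of the samples: concretely, $\rvz^{(M')} = \E_{S}\left[\frac{1}{M}\sum_{m \in S} \rvf_\theta(\rvt(\rvx,\rvalpha_m,\rvbeta_m))\right]$ where $S$ ranges uniformly over $M$-element subsets of $\{1,\dots,M'\}$, holding the $\rvalpha_{1:M'},\rvbeta_{1:M'}$ fixed. This is the standard "averaging is a contraction" trick (same as used for U-statistics / Rao–Blackwellization).

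The second step is to check that for both task types the per-example loss $\rvz \mapsto \ell(\rvz, y)$, for a fixed linear head, is convex in $\rvz$. For linear regression with square error, $\ell(\rvz,y) = \|W\rvz + b - y\|_2^2$ is a composition of an affine map with a convex quadratic, hence convex. For linear classification with softmax cross-entropy, $\ell(\rvz,y) = \log\sum_k \exp((W\rvz+b)_k) - (W\rvz+b)_y$; the log-sum-exp function is convex, composition with the affine map $\rvz \mapsto W\rvz+b$ preserves convexity, and subtracting a linear functional preserves convexity. So in both cases $\ell(\cdot, y)$ is convex.

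Third, I would combine these: fixing $y$ and $\rvalpha_{1:M'},\rvbeta_{1:M'}$, Jensen's inequality gives
\begin{equation}
\ell\left(\rvz^{(M')}, y\right) = \ell\left(\E_S\left[\tfrac{1}{M}\textstyle\sum_{m\in S}\rvf_\theta(\rvt(\rvx,\rvalpha_m,\rvbeta_m))\right], y\right) \le \E_S\left[\ell\left(\tfrac{1}{M}\textstyle\sum_{m\in S}\rvf_\theta(\rvt(\rvx,\rvalpha_m,\rvbeta_m)), y\right)\right].
\end{equation}
Now take expectation over $\rvx, y$ and over $\rvalpha_{1:M'},\rvbeta_{1:M'} \iid p(\rvalpha)p(\rvbeta)$. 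Because the $(\rvalpha_m,\rvbeta_m)$ are i.i.d., for every fixed subset $S$ of size $M$ the inner random variable $\frac{1}{M}\sum_{m\in S}\rvf_\theta(\rvt(\rvx,\rvalpha_m,\rvbeta_m))$ has exactly the same distribution as $\rvz^{(M)}(\rvx)$ built from a fresh draw of $M$ parameters; hence $\E_{p(\rvalpha_{1:M'})p(\rvbeta_{1:M'})}\left[\ell(\tfrac{1}{M}\sum_{m\in S}\rvf_\theta(\cdot),y)\right] = \E_{p(\rvalpha_{1:M})p(\rvbeta_{1:M})}\left[\ell(\rvz^{(M)},y)\right]$ for each $S$, and averaging over $S$ leaves this unchanged. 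Taking the outer expectation over $p(\rvx,y)$ then yields the claimed inequality.

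The main obstacle is essentially bookkeeping rather than a deep difficulty: one must be careful that the subset-averaging identity holds \emph{pointwise} in the sampled transformation parameters (so that Jensen applies conditionally) and that the exchangeability argument correctly collapses the expectation over the $M'$ parameters back to an expectation over $M$ of them. A minor point worth stating explicitly is the convexity of log-sum-exp and why composing with an affine map and subtracting a linear term preserves convexity; this is elementary but should be spelled out since it is the only place the specific form of the loss enters. No boundedness or integrability subtleties should arise beyond assuming the relevant expectations are finite, which I would note in passing.
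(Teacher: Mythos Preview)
Your proposal is correct and follows essentially the same approach as the paper: write $\rvz^{(M')}$ as the average over size-$M$ subsets of the $M'$ samples, invoke convexity of $\ell(\cdot,y)$ for the two loss types, apply Jensen's inequality, and then use the i.i.d.\ structure to identify the expectation over each subset with the $M$-sample expectation. The only cosmetic difference is that the paper establishes convexity of the softmax loss by a direct H{\"o}lder-inequality computation rather than by citing convexity of log-sum-exp composed with an affine map, but this is the same fact proved two ways.
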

\vspace{-16pt}
Empirically we find that, using the same encoder and the same linear classification model, feature averaging can outperform evaluation using \emph{untransformed} inputs. That is, even when it is possible to remove the transformations at test time, it is beneficial to retain them and use feature averaging.

% !TeX root = iclr2021_conference.tex

\section{Related work}
\label{sec:relatedwork}

Contrastive learning \citep{oord2018representation,henaff2019data} has progressively refined the role of transformations in learning representations, with \citet{bachman2019learning} applying repeated data augmentation and \citet{tian2019contrastive} using $Lab$ colour decomposition to define powerful self-supervized tasks. The range of transformations has progressively increased \citep{chen2020simple,chen2020big}, whilst changing transformations can markedly improve performance \citep{chen2020mocov2}. Recent work has attempted to further understand and refine the role of transformations~\citep{tian2020makes}.

The idea of differentiating with respect to transformation parameters dates back to the tangent propagation algorithm \citep{simard1998transformation,rifai2011manifold}. Using the notation of this paper, tangent propagation penalizes the norm of the gradient of a neural network evaluated at $\rvalpha=\mathbf{0}$, encouraging local transformation invariance near the original input. In our work, we target the conditional variance (Equation~\ref{eq:conditional-variance}), leading to gradient evaluations across the $\rvalpha$ parameter space with random $\rvalpha \sim p(\rvalpha)$ and a regularizer that is not a gradient norm (Equation~\ref{eq:penalty}).

Our gradient regularization approach also connects to work on gradient regularization for Lipschitz constraints. A small Lipschitz constant has been shown to lead to better generalization \citep{sokolic2017robust} and improved adversarial robustness \citep{cisse2017parseval,tsuzuku2018lipschitz,barrett2021certifiably}. 
Previous work focuses on constraining the mapping $\rvx \mapsto \rvz$ to have a small Lipschitz constant which is beneficial for \emph{adversarial} robustness.
In our work we focus on the influence of $\rvalpha$ on $\rvz$, which gives rise to \emph{transformation} robustness. Appendix~\ref{sec:app-relwork} provides a more comprehensive discussion of related work.

% !TeX root = iclr2021_conference.tex
\section{Experiments}
\label{sec:experiments}

\subsection{Datasets and set-up}
\begin{figure}
	\centering     
	\begin{subfigure}{0.48\textwidth}
		\includegraphics[width=\textwidth]{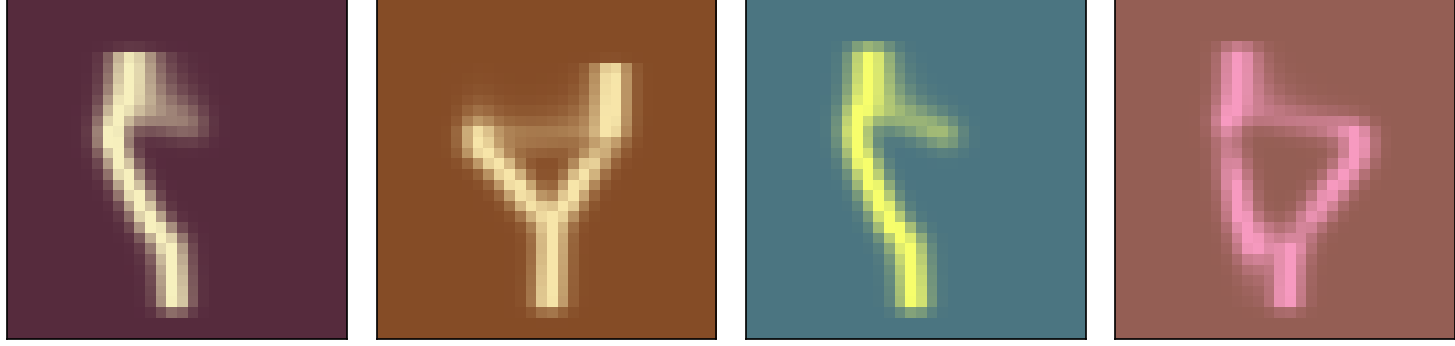}
	\end{subfigure}
	\begin{subfigure}{0.48\textwidth}
		\includegraphics[width=\textwidth]{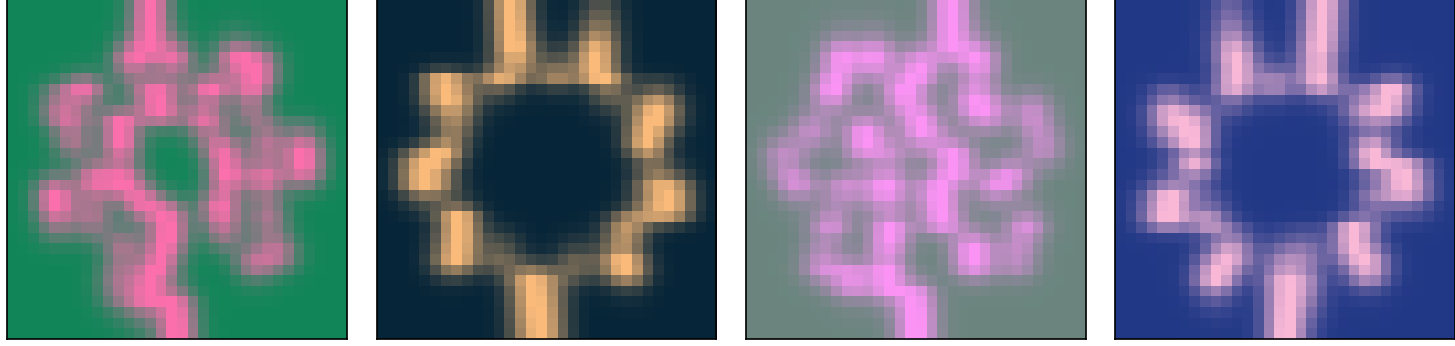}
	\end{subfigure}
	\caption{Samples from the spirograph dataset. Two sets of four images (left and right): each set shows different transformations applied to the same generative factors of interest. }
	\label{fig:spirograph-illustration}
\end{figure}

The methods proposed in this paper learn representations that discard some information, whilst retaining what is relevant. To more deeply explore this idea, we construct a dataset from a generative process controlled by both \emph{generative factors of interest} and \emph{nuisance transformations}. Representations should be able to recover the factors of interest, whilst being approximately invariant to transformation. 
To aid direct evaluation of this, we introduce a new dataset, which we refer to as the Spirograph dataset. 
Its samples are created using four generative factors and six nuisance transformation parameters.  
Figure~\ref{fig:spirograph-illustration} shows two sets of four samples with the generative factors fixed in each set. Every Spirograph sample is based on a hypotrochoid---one of a parametric family of curves that describe the path traced out by a point on one circle rolling around inside another. This generative process is fully differentiable in the parameters, meaning that our gradient regularization can be applied to every transformation. We define four downstream tasks for this dataset, each corresponding to the recovery of one of the four generative factors of interest using linear regression. The final dataset consists of 100k training and 20k test images of size $32\times 32$. For full details of this dataset, see Appendix~\ref{sec:spirograph}.

As well as the Spirograph dataset, we apply our ideas to CIFAR-10 and CIFAR-100 \citep{krizhevsky2009learning}. We base our contrastive learning set-up on SimCLR \citep{chen2020simple}. To use our gradient regularization, we adapt colour distortion (brightness, contrast, saturation and hue adjustment) as a fully differentiable transformation giving a four dimensional $\rvalpha$; we also included random cropping and flipping but we did not apply a gradient regularization to these. We used ResNet50 \citep{he2016deep} encoders for CIFAR and ResNet18 for Spirograph, and regularization parameters $\lambda=0.1$ for CIFAR and $\lambda=0.01$ for Spirograph. For comprehensive details of our set-up and additional plots, see Appendix~\ref{sec:exp}. For an open source implementation of our methods, see \url{https://github.com/ae-foster/invclr}.

\subsection{Gradient regularization leads to strongly invariant representations}

We first show that our gradient penalty successfuly learns representations that are more invariant to transformation than standard contrastive learning. 
First, we estimate the conditional variance 
\begin{wrapfigure}[6]{r}{0.45\columnwidth}
	\centering 
	\includegraphics[width=0.35\textwidth]{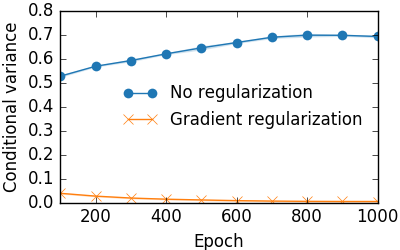}
	\vspace{-8pt}
	\caption{
		Conditional variance for CIFAR-10 as per Equation~\ref{eq:conditional-variance}. Error bars represent $\pm 1$ standard error from 3 runs.
	}
	\label{fig:condvar}
	\vspace{-16pt}
\end{wrapfigure}
of the representation that was used as the starting point for motivating our approach, i.e. Equation~\ref{eq:conditional-variance}, using the slower, but more exact, nested Monte Carlo estimator of Equation~\ref{eq:naive-sample-var} to evaluate this.
In Figure~\ref{fig:condvar} we see that the gradient penalty strikingly reduces the conditional variance on CIFAR-10 compared to standard contrastive learning. 

\begin{wraptable}{r}{0.45\columnwidth}
	\vspace{42pt}
	\caption{Test loss when linear regression is used to predict $\rvalpha$ from $\rvz$ on CIFAR-10. 
		The reference value is $\text{Mean}_i\, \Var(\rvalpha_i)$. 
		We present the mean and $\pm$ 1 s.e. from 3 runs.\vspace{-5pt}}
	\centering
	\begin{tabular}{rl}
				&  Test loss  \\
				\hline
		No regularization &  0.0353  $\pm$ 0.0002 \\
		Regularization &  0.0415 $\pm$ 0.00006 \\
		Reference value &  0.0408
	\end{tabular}
	\vspace{-10pt}
	\label{tab:invariance}
\end{wraptable}
As an additional measure of representation invariance, we fit a linear regression model that predicts $\rvalpha$ from $\rvz$, for which higher loss indicates a greater degree of invariance. 
We also compute a reference loss: the loss that would be obtained when predicting $\rvalpha$ using only a constant. 
In Table~\ref{tab:invariance}, we see that unlike standard contrastive learning, after training with gradient regularization the linear regression model cannot predict $\rvalpha$ from $\rvz$ any better than using a constant prediction. The loss is actually higher than the reference value because the former is obtained by training a regressor for a finite number of steps, whilst the latter is a theoretical optimum value. 
Similar results for other datasets are in Appendix~\ref{sec:exp}.

\subsection{Gradient regularization for downstream tasks and test time dataset shift}

\begin{figure}
	%\vspace{-5pt}
	\centering     %%% not \center
	\begin{subfigure}{.32\textwidth}{\includegraphics[width=\textwidth]{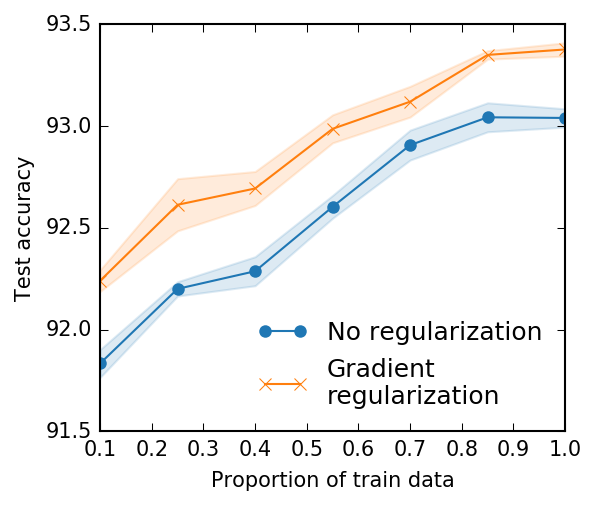}\vspace{-3pt}\caption{CIFAR-10}}\end{subfigure}~
	\begin{subfigure}{.3\textwidth}{\includegraphics[width=\textwidth]{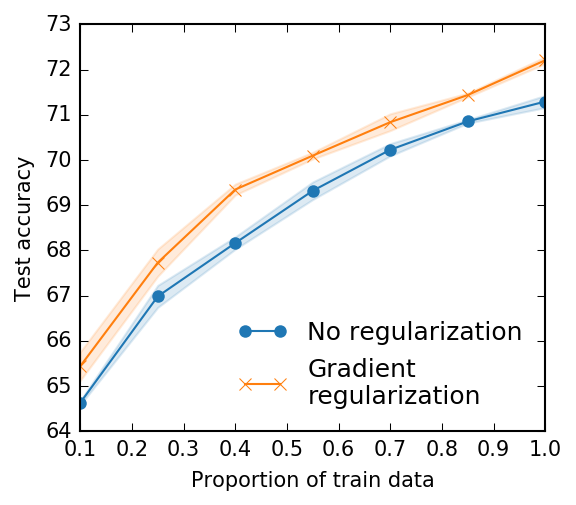}\vspace{-3pt}\caption{CIFAR-100}}\end{subfigure}~
	\begin{subfigure}{.33\textwidth}{\includegraphics[width=\textwidth]{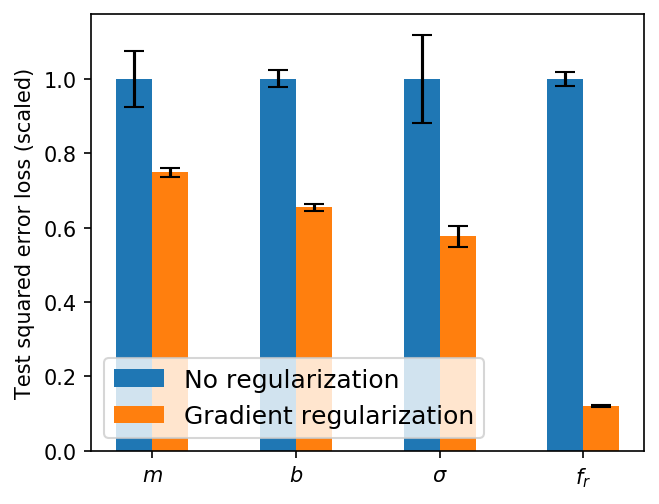}\vspace{-3pt}\caption{Spirograph}}\end{subfigure}
	\vspace{-5pt}
	\caption{Downstream task performance of gradient regularized representations. 
		(a)(b) Top-1 test accuracy for various levels of semi-supervision (higher better). 
		(c) Test loss on four downstream regression tasks on Spirograph that recover the generative factors of interest (lower better). The loss is rescaled for legibility, see Table~\ref{tab:spirograph_bar} for raw values. Error bars are $\pm1$ standard error from $3$ runs.}
	\vspace{-12pt}
	\label{fig:downstream}
\end{figure}

We now show that these more invariant representations perform better on downstream tasks. For CIFAR, we produce representations for each element of the training and test set (by applying the encoder $\rvf_\theta$ to untransformed inputs). We then fit a linear classifier on the training set, using different fractions of the class labels. This allows us to assess our representations at different levels of supervision. We use the entire test set to evaluate each of these classifiers. 
In Figures~\ref{fig:downstream}(a) and (b), we see that the test accuracy improves across the board with gradient regularization.

For Spirograph, we take a similar approach to evaluation: we create representations for the training and test sets and fit linear regression models with representations as features for each of the four downstream tasks. In Figure~\ref{fig:downstream}(c), we see the test loss on each task with the baseline scaled to 1. Here we see huge improvements across all tasks, presumably due to the ability to apply gradient regularization to all transformations (unlike for CIFAR).

\begin{figure}
	\centering     
	\begin{subfigure}{.3\textwidth}{\includegraphics[width=\textwidth]{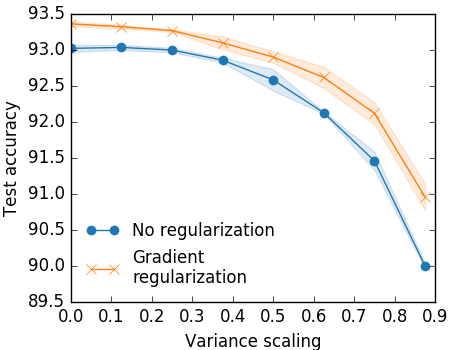}\caption{CIFAR-10 var shift}}\end{subfigure}~
	\begin{subfigure}{.31\textwidth}{\includegraphics[width=\textwidth]{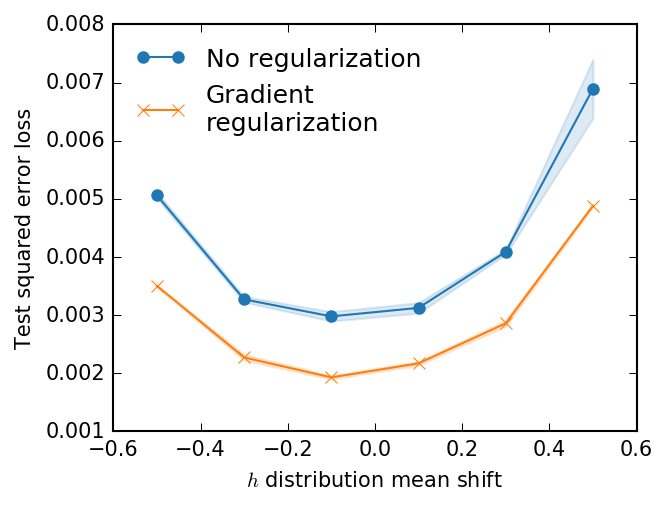}\caption{Spirograph mean shift}}\end{subfigure}~
	\begin{subfigure}{.31\textwidth}{\includegraphics[width=\textwidth]{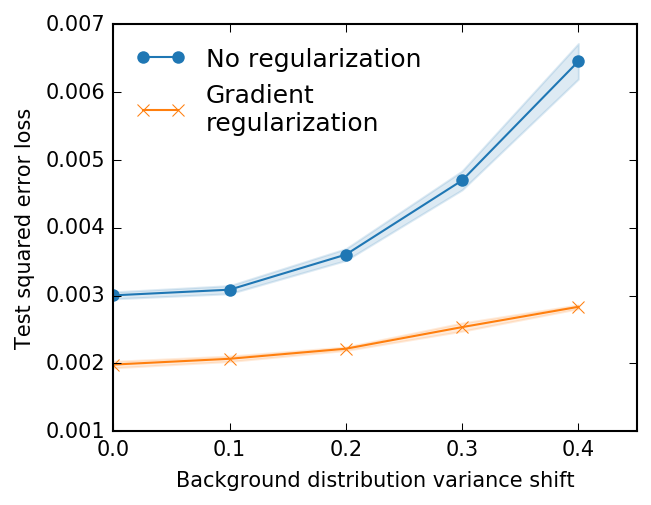}\caption{Spirograph var shift}}\end{subfigure}
	\vspace{-5pt}
	\caption{Assessing representation robustness to test time distribution shift.
		(a) Changing the variance of colour distortions; 0 is no transformation and $0.5$ is the training regime. (b) Mean shifting of the distribution of the transformation parameter $h$. (c) Variance shifting of the background colour distribution. In (b)(c), 0 shift indicates the training regime. Error bars are $\pm1$ s.e. from 3 runs.}
	\label{fig:robustness}
	\vspace{-4pt}
\end{figure}

We further study the effect of transformation at test time, showing that gradient penalized representations can be more robust to shifts in the transformation distribution. For CIFAR-10, we apply colour distortion transformations at test time with different levels of variance. By focusing on colour distortion at test time, we isolate the transformations that the gradient regularization targetted. In Figure~\ref{fig:robustness}(a) we see that when the test time distribution is shifted to have higher variance than the training regime, our gradient penalized representations perform 
better than using contrastive learning alone. 
For Spirograph, we investigate changing both the mean of the transformation distribution, moving the entire test distribution away from the training regime, and increasing the variance of transformations to add noise. Results are shown in Figure~\ref{fig:robustness}(b) and (c). In \ref{fig:robustness}(c) in particular, we see that gradient regularized representations are robust to a greater level of distortion at test time.

\subsection{Feature averaging further improves performance}

\begin{figure}
	\centering     
	\begin{subfigure}{\textwidth}{\includegraphics[width=\textwidth]{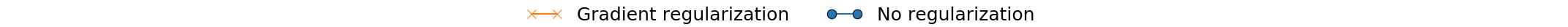}}\end{subfigure}
	\begin{subfigure}{.298\textwidth}{\includegraphics[width=\textwidth]{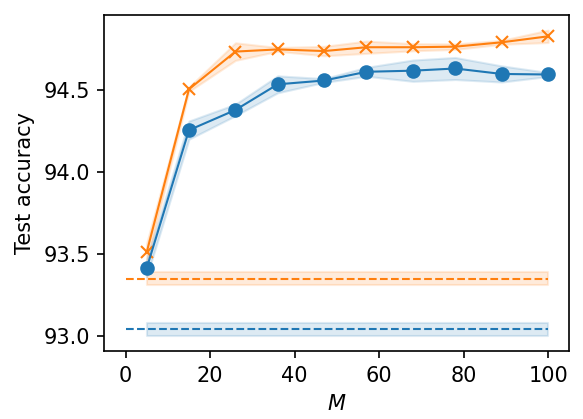}\vspace{-3pt}\caption{CIFAR-10}}\end{subfigure}~
	\begin{subfigure}{.3\textwidth}{\includegraphics[width=\textwidth]{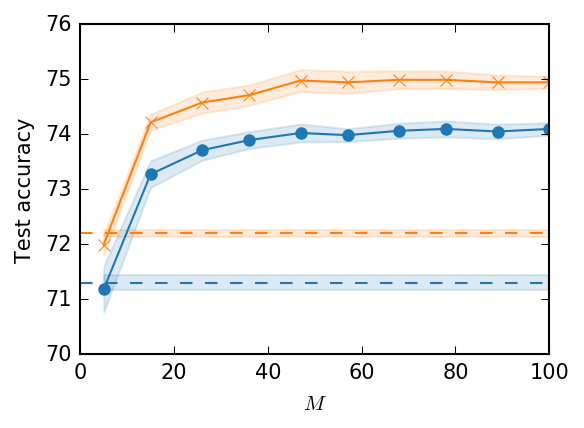}\vspace{-3pt}\caption{CIFAR-100}}\end{subfigure}~
	\begin{subfigure}{.3\textwidth}{\includegraphics[width=\textwidth]{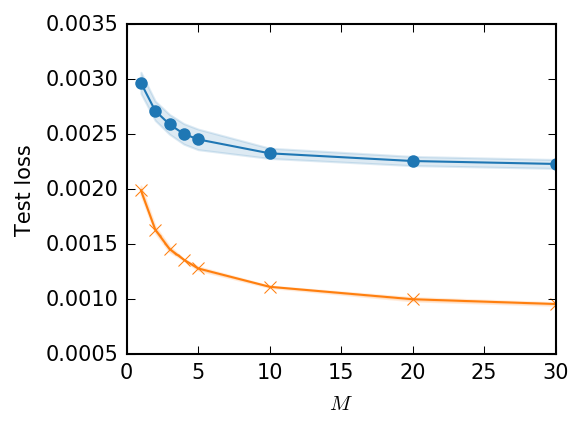}\vspace{-3pt}\caption{Spirograph}}\end{subfigure}
	\vspace{-5pt}
\caption{The impact of feature averaging on CIFAR-10, CIFAR-100 and Spirograph. (a)(b) Test accuracy for CIFAR-10 and CIFAR-100 respectively for various values of $M$. Dashed lines represent evaluation with untransformed inputs. (c) Test mean square error averaged over all four tasks for Spirograph (untransformed inputs not valid here). Error bars are $\pm1$ s.e. from 3 runs.}
\vspace{-12pt}
	\label{fig:fa}
\end{figure}

We now assess the impact of feature averaging on test time performance. For CIFAR, we apply feature averaging using \emph{all} transformations, including random crops etc., and compare with the standard protocol of using untransformed inputs to form the test representations. Figures~\ref{fig:fa}(a) and (b) show that feature averaging leads to significant improvements. %\todo{You don't actually label the subfigures as (a) (b) (c) here in Figure 5 itself} 
This adds to the result of Theorem~\ref{thm:convex}, which implies that test loss decreases as $M$ is increased. In Figure~\ref{fig:fa}(c), we see that feature averaging has an equally beneficial impact on Spirograph. It is interesting to note that in both cases there is still significant residual benefit from gradient regularization, even with a large value of $M$. 
%For CIFAR, this is likely because random crops and flips were not included in the differentiable transformations, but interestingly there is also a residual benefit for Spirograph.

\subsection{Our methods compare favourably with other published baselines}

\begin{table}
	\vspace{-10pt}
	\caption{Comparative best test accuracy of various self-supervized representation learning techniques, evaluated using linear classification.}
	\centering
	\vspace{-5pt}
	\begin{tabular}{lrr}
		Method  & CIFAR-10 acc. & CIFAR-100 acc.  \\
		\hline
		AMDIM small \citep{bachman2019learning} & 89.5\% & 68.1\% \\
	    AMDIM large \citep{bachman2019learning} & 91.2\% & 70.2\%  \\
		SimCLR \citep{chen2020simple} & 94.0\% & -  \\
		\textbf{Ours (SimCLR base)} & \textbf{94.9}\% & \textbf{75.1}\%  \\
	\end{tabular}
	\vspace{-10pt}
	\label{tab:literature}
\end{table}

Our primary aim was to show that both gradient regularization and feature averaging lead to improvements compared to baselines that are in other respects identical. Our methods are applicable to almost any base contrastive learning approach, and we would expect them to deliver improvements across this range of different base methods. 
In Table~\ref{tab:literature}, we present published baselines on CIFAR datasets, along with the results that we obtain using our gradient regularization and feature averaging with SimCLR as a base method.
This is the default base method that we recommend, and that was used in our previous experiments.
Interestingly, the best ResNet50 encoder from our experiments achieves an accuracy of 94.9\% on CIFAR-10, which outperforms the next best published result from the contrastive learning literature by almost 1\%, and 75.1\% on CIFAR-100, an almost 5\% improvement over a significantly larger encoder architecture. 
As such, we see our results actually provide performance that is state-of-the-art for contrastive learning on these benchmarks.
In fact, our performance increases almost entirely close the gap to the state-of-the-art performance for fully supervized training 
with the same architecture on CIFAR-10 (95.1\%, \citet{chen2020simple}).

To demonstrate that our ideas generalize to other contrastive learning base methods, we apply our ideas to MoCo v2 \citep{chen2020mocov2}. 
Table~\ref{tab:mocov2} shows that, whilst MoCo v2 itself does not perform as well as SimCLR on CIFAR-100, the addition of gradient regularization and feature averaging still leads to significant improvements in its performance.  Table~\ref{tab:mocov2} further illustrates that both gradient regularization and feature averaging contribute to the performance improvements offered by our approach and that our techniques generalize across diffrent encoder architectures.

\subsection{Hyperparameter sensitivity}
As a further ablation study, we investigated the sensitivity of our method to changes in the gradient regularization hyperparameter $\lambda$ (as defined in Equation~\ref{eq:full-loss}). In Figure~\ref{fig:lambda_sg}(a) we see that, as expected, the conditional variance of representations decreases as $\lambda$ is increased. The downstream task performance Figure~\ref{fig:lambda_sg}(b) similarly improves as we increase $\lambda$, reaching an optimum around $\lambda=10^{-3}$, before beginning to increase due to over-regularization. We see that \emph{a wide range of values} of $\lambda$ deliver good performance and the method is not overly sensitive to careful tuning of $\lambda$.

\begin{figure}[t]
	\centering     
	\begin{subfigure}{.33\textwidth}{\includegraphics[width=\textwidth]{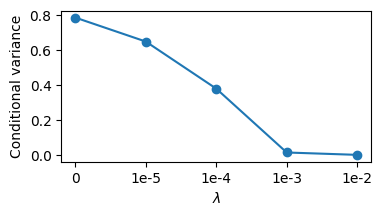}\vspace{-3pt}\caption{Conditional variance}}\end{subfigure}~
	\qquad \qquad
	\begin{subfigure}{.35\textwidth}{\includegraphics[width=\textwidth]{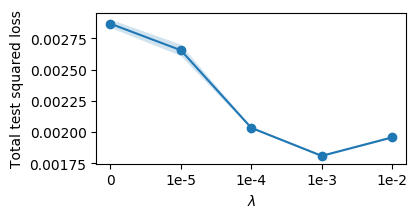}\vspace{-3pt}\caption{Downstream task performance}}\end{subfigure}~
	\vspace{-6pt}
	\caption{The impact of the regularization hyperparameter $\lambda$ on representation learning with the Spirograph dataset. (a) Conditional variance of Equation~\ref{eq:conditional-variance}. (b) The total mean square error on all four downstream tasks. Error bars are $\pm1$ s.e. from 3 runs.  Smaller is better in both cases giving an optimum around $\lambda=10^{-3}$, but with stable performance as $\lambda$ is increased above this.}
	\label{fig:lambda_sg}
\end{figure}

\begin{table}[t]
	\caption{Results for representation learning on CIFAR-100 with MoCo v2 as the base contrastive learning method, with gradient regularization in isolation and in combination with feature averaging. We trained two different encoder architectures. We present test accuracy from linear classification evaluation. Feature averaging uses $M=40$. Errors are $\pm1$ s.e. from multiple runs.}
	\centering
	\begin{tabular}{lrr}
		Method  & ResNet18 acc. & ResNet50 acc.  \\
		\hline
		MoCo v2 & 52.3\% $\pm$ 0.3 & 57.9\% $\pm$ 0.1  \\
		MoCo v2 with gradient penalty & 54.1\% $\pm$ 0.1 & 58.9\% $\pm$ 0.2 \\
		\textbf{MoCo v2 with gradient penalty and feature averaging} & \textbf{60.6\% $\pm$ 0.1} & \textbf{64.4\% $\pm$ 0.2}  \\
	\end{tabular}
	\label{tab:mocov2}
\end{table}

% !TeX root = iclr2021_conference.tex

\vspace{-3pt}
\section{Conclusion}
\label{sec:conclusion}
\vspace{-3pt}
Viewing contrastive representation learning through the lens of representation invariance to transformation, we derived a gradient regularizer that controls how quickly representations can change with transformation, and proposed feature averaging at test time to pull in information from multiple transformations. These approaches led to representations that performed better on downstream tasks.
Therefore, our work provides evidence that invariance is highly relevant to the success of contrastive learning methods, and that there is scope to further improve upon these methods by using invariance as a guiding principle.

%\subsubsection*{Author Contributions}
%If you'd like to, you may include  a section for author contributions as is done
%in many journals. This is optional and at the discretion of the authors.
%
\subsubsection*{Acknowledgments}
AF gratefully acknowledges funding from EPSRC grant no. EP/N509711/1. AF would also like to thank Benjamin Bloem-Reddy for helpful discussions
about theoretical aspects of this work.

\bibliography{ms}
\bibliographystyle{iclr2021_conference}
\clearpage
\appendix
% !TeX root = iclr2021_conference.tex
% !TeX spellcheck = en_GB 
\section{Mutual information}
\label{sec:mi}
In Section~\ref{sec:background}, we saw that the InfoNCE objective \eqref{eq:infonce} fulfills the need to make $p(\rvz|\rvx)$ tightly focused on a single point whilst simultaneously requiring $p(\rvz)$ to be well spread out over representation space. In this appendix, we show that the same general principle of making $p(\rvz|\rvx)$ tightly focused on a single point whilst simultaneously requiring $p(\rvz)$ to be well spread out over representation space connects to mutual information maximization. 

To establish the connection to mutual information, we take the differential entropy as our measure of `spread'. Recall the differential entropy of a random variable $\rvw$ is
\begin{equation}
	H[p(\rvw)] := \E_{p(\rvw)}[-\log p(\rvw)].
\end{equation}
We then translate our intuition to make $p(\rvz|\rvx)$ tightly focused on a single point whilst simultaneously requiring $p(\rvz)$ to be well spread out over representation space into requiring $\E_{p(\rvx)}[H[p(\rvz|\rvx)]]$ to be minimized whilst $H[p(\rvz)]$ should be simultaneously maximized. This suggests the following loss function
\begin{align}
	\mathcal{L}_\text{entropy} = \E_{p(\rvx)}[H[p(\rvz|\rvx)]] - H[p(\rvz)] = -I(\rvx;\rvz)
\end{align}
which is the (negative) mutual information between $\rvx$ and $\rvz$. Note that in this formulation, it is the distribution $p(\rvz|\rvx)$ as much as the InfoMax principle which determines how this loss will behave. Finally, there is a clear connection between the InfoNCE loss and mutual information, specifically the InfoNCE loss is, in expectation and up to an additive constant, a lower bound on $I(\rvx;\rvz)$ \cite{oord2018representation,poole2019variational}.

\section{Method}
\label{sec:app-method}
\subsection{An alternative variance formula}
We present a derivation of our alternative formula for the variance (dropping the conditioning from the notation for conciseness)
\begin{align*}
	&\tfrac{1}{2}\E_{p(\rvalpha)p(\rvalpha')}\left[(F(\rvalpha,\rvbeta,\rvx,\rve) - F(\rvalpha',\rvbeta,\rvx,\rve))^2 \right] \\&= \tfrac{1}{2}\E_{p(\rvalpha)p(\rvalpha')}\left[(F(\rvalpha,\rvbeta,\rvx,\rve) - \E_\rvalpha[F(\rvalpha,\rvbeta,\rvx,\rve)] + \E_\rvalpha[F(\rvalpha,\rvbeta,\rvx,\rve)] - F(\rvalpha',\rvbeta,\rvx,\rve))^2  \right]
	\\ \begin{split}&=\tfrac{1}{2}\E_{p(\rvalpha)p(\rvalpha')}\left[(F(\rvalpha,\rvbeta,\rvx,\rve) - \E_\rvalpha[F(\rvalpha,\rvbeta,\rvx,\rve)])^2 + (\E_\rvalpha[F(\rvalpha,\rvbeta,\rvx,\rve)] - F(\rvalpha',\rvbeta,\rvx,\rve))^2  \right]\\
		&\ \ + \E_{p(\rvalpha)p(\rvalpha')}\left[(F(\rvalpha,\rvbeta,\rvx,\rve) - \E_\rvalpha[F(\rvalpha,\rvbeta,\rvx,\rve)])(\E_\rvalpha[F(\rvalpha,\rvbeta,\rvx,\rve)] - F(\rvalpha',\rvbeta,\rvx,\rve))  \right]\end{split}\\
	&= \tfrac{1}{2}\E_{p(\rvalpha)p(\rvalpha')}\left[(F(\rvalpha,\rvbeta,\rvx,\rve) - \E_\rvalpha[F(\rvalpha,\rvbeta,\rvx,\rve)])^2 + (F(\rvalpha',\rvbeta,\rvx,\rve)] - \E_\rvalpha[F(\rvalpha,\rvbeta,\rvx,\rve)])^2  \right]\\
	&=\Var_{p(\rvalpha)}\left[ F(\rvalpha,\rvbeta,\rvx,\rve)  \right].
\end{align*}

\subsection{Motivating the Rademacher distribution}
We are interested in the conditional variance of $\rvz$ with respect to $\rvalpha$, but as $\rvz$ is a vector valued random variable we properly need to consider the conditional covariance matrix $\Sigma = \Cov_\rvalpha (\rvz |\rvx,\rvbeta)$. We henceforth consider $\rvx,\rvbeta$ to be fixed. To reduce conditional variance in all directions, it makes sense to reduce the trace $\Tr \Sigma$. Due to computational limitations, we cannot directly estimate this trace at each iteration, instead we must estimate $\Var (\rve \cdot \rvz) = \rve^\top \Sigma \rve$. However, by carefully selecting the distribution for $\rve$ we can effectively target the trace of the covariance matrix by taking the expectation over $\rve$. Specifically, suppose that the components of $\rve$ are independent Rademacher random variables ($\pm1$ with equal probability). Then
\begin{align}
	\E_{p(\rve)}\left[\rve^\top \Sigma \rve\right]&=\E_{p(\rve)}\left[\sum_{ij} e_i\Sigma_{ij} e_j\right] =\sum_{ij}\Sigma_{ij}\E_{p(\rve)}\left[ e_i e_j\right] =\sum_{ij}\Sigma_{ij}\delta_{ij}= \Tr\Sigma.
\end{align}

\section{Theory}
\label{sec:app-theory}
We present the proof of Theorem~\ref{thm:convex} which is restated for convenience.
\convex*
\begin{proof}
	We have the softmax loss
	\begin{equation}
	\ell(\rvz,y) = -\vw_y^\top \rvz + \log \left(\sum_j \exp\left(\vw_j^\top \rvz\right) \right)
	\end{equation}
	or the square error loss
	\begin{equation}
	\ell(\rvz,y) = \left|y - \vw^\top\rvz\right|^2.
	\end{equation}
	We first show that both loss functions considered are convex in the argument $\rvz$. To show this, we fix $0 \le p = 1-q \le 1$. For softmax loss, we have
	\begin{align}
		\ell &\left(p\rvz_1 + q\rvz_2, y \right) \\
		& =-\rvw_y^\top (p\rvz_1 + q\rvz_2) + \log \left(\sum_j \exp\left(\rvw_j^\top (p\rvz_1 + q\rvz_2) \right) \right) \\
		&= -p\rvw_y^\top \rvz_1 -q\rvw_y^\top \rvz_2 + \log \left(\sum_j \exp\left(\rvw_j^\top \rvz_1\right)^p \exp\left(\rvw_j^\top \rvz_2\right)^q \right) \\
		\begin{split}
		&\le -p\rvw_y^\top \rvz_1 -q\rvw_y^\top \rvz_2 + \log \left(\left(\sum_j \exp\left(\rvw_j^\top \rvz_1\right)\right)^p \left( \sum_j \exp\left(\rvw_j^\top \rvz_2\right)  \right)^q \right) \\
		& \text{by H{\"o}lder's Inequality} \end{split}	\\
		&= -p\rvw_y^\top \rvz_1 -q\rvw_y^\top \rvz_2 + p\log \left(\sum_j \exp\left(\rvw_j^\top \rvz_1\right)\right) + q\log\left(\sum_j \exp\left(\rvw_j^\top \rvz_2\right)\right)\\
		&= p\ell \left(\rvz_1, y \right) + q\ell\left(\rvz_2, y \right)
	\end{align}
	and for square error loss we have
	\begin{align}
		\ell \left(p\rvz_1 + q\rvz_2, y \right) &= |y - \vw^\top (p\rvz_1 + q\rvz_2)|^2 \\
		&= |p(y - \vw^\top \rvz_1) + q(y-\vw^\top \rvz_2)|^2 \\
		&= p|y-\vw^\top \rvz_1|^2 + q|y-\vw^\top \rvz_2|^2 + (p^2 - p)\left|\vw^\top \rvz_1 - \vw^\top \rvz_2\right|^2 \\
		&\le p|y-\vw^\top \rvz_1|^2 + q|y-\vw^\top \rvz_2|^2 \\
		&= p\ell \left(\rvz_1, y \right) + q\ell\left(\rvz_2, y \right)
	\end{align}
	For the inequality in the Theorem, we consider drawing $M' \ge M$ samples, and randomly choosing an $M$-subset. Let $S$ represent this subset and let $\rvz^{(M)}_S$ represent the feature averaged representation that uses the subset $S$. We have
	\begin{align}
	\E_{p(\rvx, y)p(\rvalpha_{1:M})p(\rvbeta_{1:M})}\left[\ell\left(\rvz^{(M)}, y\right) \right] &= \E_{p(\rvx, y)p(\rvt_{1:M'})p(\rvalpha_{1:M'})p(\rvbeta_{1:M'})p(S)}\left[\ell\left(\rvz^{(M)}_S, y\right) \right] \\
	& \ge \E_{p(\rvx, y)p(\rvalpha_{1:M'})p(\rvbeta_{1:M'})}\left[\ell\left(\E_{p(S)}\left[\rvz^{(M)}_S\right], y\right) \right] \label{eq:first-jensen}\\
	&= \E_{p(\rvx, y)p(\rvalpha_{1:M'})p(\rvbeta_{1:M'})}\left[\ell\left(\rvz^{(M')}, y\right) \right]
	\end{align}
	where the inequality at \eqref{eq:first-jensen} is by Jensen's Inequality. This completes the proof.
\end{proof}

We provide an informal discussion of other theoretical results that relate to our work. \citet{lyle2020benefits} explored PAC-Bayesian approaches to analyzing the role of group invariance in generalization of supervized neural network models. The central bound based on \citet{catoni2007pac} is given in Theorem 1 of \citet{lyle2020benefits} and depends on the empirical risk $\hat{R}_\ell(Q, \mathcal{D}_n)$ and the KL term $KL(Q||P)$ which represents the PAC-Bayesian KL divergence between distributions on hypothesis space. Theorem 7 of \citet{lyle2020benefits} shows $KL(	Q^\circ || P^\circ) \le KL(Q||P)$, where $Q^\circ$ and $P^\circ$ are formed by symmetrization such as feature averaging over the group of transformations. In our context, although the transformations do not form a group, we could still consider a symmetrization operation with feature averaging. If the symmetrization does not affect the empirical risk, then Theorem 9 of \citet{lyle2020benefits} would apply to our setting and we would be able to obtain a tighter generalization bound for our suggested approach of feature averaging. 

\section{Related work}
\label{sec:app-relwork}

\subsection{The role of transformations in contrastive learning}
Recent work on contrastive learning, initiated by the development of Contrastive Predictive Coding \citep{oord2018representation,henaff2019data}, has progressively moved the transformations to a more central position in understanding and improving these approaches.
In \citet{bachman2019learning}, multiple views of a context are extracted, on images this utilizes repeated data augmentation such as random resized crop, random colour jitter, and random conversion to grayscale, and the model is trained to maximize information between these views using an InfoNCE style objective. 
Other approaches are possible, for instance \citet{tian2019contrastive} obtained multiple views of images using $Lab$ colour decomposition. 
In SimCLR \citep{chen2020simple,chen2020big}, the approach of applying multiple data augmentations (including flip and blur, as well as crops, colour jitter and random grayscale) and using an InfoNCE objective was simplified and streamlined, and the central role of the augmentations was emphasized. 
By changing the set of transformation operations used, \citet{chen2020mocov2} were able to improve their contrastive learning approach and achieve excellent performance on downstream detection and segmentation tasks. 
\citet{tian2020makes} studied what the best range of transformations for contrastive learning is. The authors found that there is a `sweet spot' in the strength of transformations applied in contrastive learning, with transformations that are too strong or weak being less favourable. \citet{winkens2020contrastive} showed that contrastive methods can be successfully applied to out-of-distribution detection. We note that for tasks such as out-of-distribution detection, transformation \emph{covariance} may be a more relevant property than invariance.

\subsection{Gradient regularization to enforce Lipschitz constraints}
Constraining a neural network to be Lipschitz continuous bounds how quickly its output can change as the input changes. In supervized learning, a small Lipschitz constant has been shown to lead to better generalization \citep{sokolic2017robust} and improved adversarial robustness \citep{cisse2017parseval,tsuzuku2018lipschitz}. 
One practical method for constraining the Lipschitz constant is gradient regularization \citep{drucker1992improving,gulrajani2017improved}. Lipschitz constraints have also been applied in a self-supervized context: in \citet{ozair2019wasserstein}, the authors used a Wasserstein dependency measure in a contrastive learning setting by using gradient penalization to ensure that the function $\rvx,\rvx' \mapsto s_\phi(\rvf_\theta(\rvx), \rvf_\theta(\rvx'))$ is 1-Lipschitz. 
Our work uses a gradient regularizer to control how quickly representations can change, but unlike existing work we focus on how representations change with $\rvalpha$ as $\rvx$ is fixed, instead of how they change with $\rvx$.

\subsection{Group invariant neural networks}
A large body of recent work has focused on designing neural network architectures that are perfectly invariant, or equivariant, to a set of transformations $\mathcal{T}$ in the case when $\mathcal{T}$ forms a group. \citet{cohen2016group} showed how convolutional neural networks can be generalized to have equivariance to arbitrary group transformations applied to their inputs. This can apply, for instance, to rotation groups on the sphere \citep{cohen2018spherical}, rotation and translation groups on point clouds \citep{thomas2018tensor}, and permutation groups on sets \citep{zaheer2017deep}. Transformations that form a group cannot remove information from the input (because they must be invertible) and can be composed in any order. This means that the more general transformations considered in our work cannot form a group---they cannot be composed (repeated decreasing of brightness to zero is not allowed) nor inverted (crops are not invertible). We have therefore considered methods that improve invariance under much more general transformations.

\subsection{Feature averaging and pooling}
The concepts of sum-, max- and mean-pooling have a rich history in deep learning \citep{krizhevsky2012imagenet,graham2014fractional}.
For example, pooling can be used to down-scale representations in convolutional neural networks (CNNs) as part of a single forward pass through the network with a single input.
In our work, however, we apply feature averaging, or mean-pooling, using multiple, differently transformed versions of the same input. 
This is more similar to \citet{chatfield2014return}, who considered pooling or stacking augmented inputs as part of a CNN, and \citet{yoo2015multi} who proposed a multi-scale pyramid pooling approach.
Unlike these works, we apply pooling in an unsupervized contrastive representation learning context. Our feature averaging occurs on the final representations, rather than in a pyramid, and not on intermediate layers of the network.
We also use the transformation distribution that is used to define the self-supervized task itself.
Other work has explored theoretical aspects of feature averaging \citep{chen2019invariance,lyle2020benefits} in the supervized learning setting, showing conditions on the invariance properties of the underlying data distribution that can be exploited to obtain improved generalization using feature averaging.
For a detailed discussion of \citet{lyle2020benefits} and its connections with our own work, see Section~\ref{sec:app-theory}.

\section{Spirograph dataset}
\label{sec:spirograph}

\begin{figure}
	\centering
	\begin{subfigure}{0.3\textwidth}{\includegraphics[width=\textwidth]{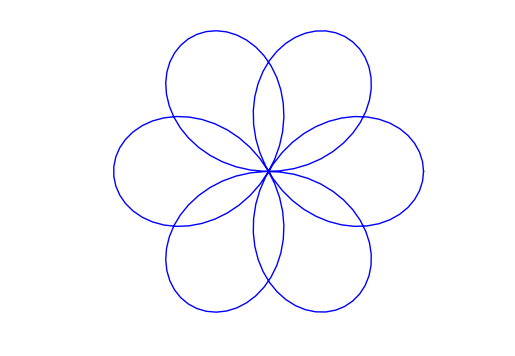}\caption{Line drawing}}\end{subfigure}
	\begin{subfigure}{0.3\textwidth}{\includegraphics[width=\textwidth]{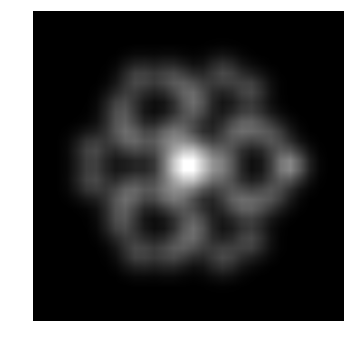}\caption{Pixel intensity}}\end{subfigure}
	\begin{subfigure}{0.3\textwidth}{\includegraphics[width=\textwidth]{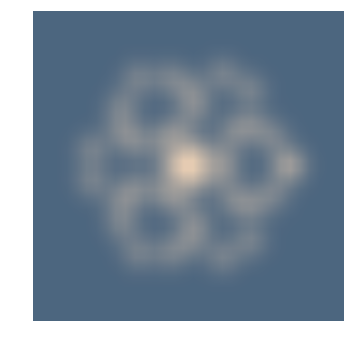}\caption{Colour sample}}\end{subfigure}
	\caption{A sample from the Spirograph dataset with $m=4, b=0.4, h=2, \sigma=1, (f_r\quad f_g\quad f_b) = (0.9\quad 0.8\quad 0.7), (b_r\quad b_g\quad b_b) = (0.3\quad 0.4\quad 0.5)$.}
	\label{fig:hypotrochoid}
\end{figure}
We propose a new dataset that allows the separation of \emph{generative factors of interest} from \emph{nuisance transformation factors} and that is formed from a fully differentiable generative process. 
A standalone implementation of this dataset can be found at \url{https://github.com/rattaoup/spirograph}.
Our dataset is inspired by the beautiful spirograph patterns some of us drew as children, which are mathematically hypotrochoids given by the following equations
\begin{align}
x &= (m - h) \cos(t) + h \cos \left(\frac{(m-h)t}{b} \right)\\
y &= (m - h) \sin(t) - h \sin \left(\frac{(m-h)t}{b} \right)
\end{align}
Figure~\ref{fig:hypotrochoid}(a) shows an example. To create an image dataset from such curves, we choose 40 equally spaced points $t_i$ with $t_1 = 0$ and $t_{40} = 2\pi$, giving a sequence of points $(x_1, y_1), ..., (x_{40}, y_{40})$ on the chosen hypotrochoid. For smoothing parameter $\sigma$, the pixel intensity at a point $(u, v)$ is given by
\begin{equation}
i(u, v) = \frac{1}{40}\sum_{i=1}^{40} \exp\left(\frac{-(u - x_i)^2 - (v - y_i)^2}{\sigma} \right).
\end{equation}
For a grid of pixel, the intensity values are normalized so that the maximum intensity is equal to 1. Figure~\ref{fig:hypotrochoid}(b) shows the pixel intensity with $\sigma=0.5$. Finally, for a foreground colour with RGB values $(f_r,f_g,f_b)$ and background colour $(b_r,b_g,b_b)$, the final RGB values at a point $(u, v)$ is
\begin{equation}
\vc(u, v) = i(u, v)\begin{pmatrix}f_r\\f_g\\f_b\end{pmatrix} + (1 - i(u,v))\begin{pmatrix}b_r\\b_g\\b_b\end{pmatrix}
\end{equation}
The final coloured sample image is shown in Figure~\ref{fig:hypotrochoid}(c).

The Spirograph sample is fully specified by the parameters $m, b, h, \sigma, f_r,f_g,f_b,b_r,b_g,b_b$. In our experiments, we treat $m, b, \sigma,f_r$ as parameters of interest. We treat $h$ and the remaining colour parameters as nuisance parameters. That is, we take $\rvx = (m, b, \sigma,f_r)$ and $\rvalpha = (h, f_g,f_b,b_r,b_g,b_b)$ and the transformation $\rvt(\rvx,\rvalpha)$ is the full generative process described above. There are no additional parameters $\rvbeta$ for this dataset. Figure~\ref{fig:spirograph-illustration} shows two sets of four samples from the Spirograph dataset, in each set the generative factors of interest are fixed and the nuisance parameters are varied. In general for the Spirograph dataset, the distinction between generative factors of interest and nuisance parameters can be changed to attempt to learn different aspects of the data. The transformation $\rvt$ is fully differentiable, meaning that we can apply gradient penalization to all the nuisance parameters of the generative process.
In our experiments, we took the following distributions to sample random values of the parameters: $m \sim U(2, 5), b \sim U(0.1, 1.1), h \sim U(0.5, 2.5), \sigma \sim U(0.25, 1), f_r, f_g,f_b \sim U(0.4, 1), b_r,b_g,b_b \sim U(0, 0.6)$. We synthesized 100,000 training images and 20,000 test images with dimension $32\times 32$. 

% !TeX root = iclr2021_conference.tex
% !TeX spellcheck = en_GB 

\section{Experiment details}
Our experiments were implemented in PyTorch \citep{paszke2019pytorch} and ran on 8 Nvidia GeForce GTX 1080Ti GPUs. See \url{https://github.com/ae-foster/invclr} for an implementation of our approaches.

\label{sec:exp}
\subsection{Differentiable colour distortion}
We want to improve the representations learned from contrastive methods by explicitly encouraging stronger invariance to the set of transformations. Our method is to restrict gradients of the representations with respect to certain transformations. Ensuring that the transformations are practically differentiable within PyTorch \citep{paszke2019pytorch} required a thorough study of the transformations. 
The subset of transformations we apply gradient regularization to includes colour distortions which are conventionally treated as a part of data preprocessing. Rewriting this as a differentiable module within the computational graph allows us to practically compute the gradient regularizer of \eqref{eq:gp}. We will consider adjusting brightness, contrast, saturation, hue of an image. In fact, most of these transformations are simply linear transformations of the original image. First, the brightness adjustment is simply defined as
\begin{equation}
	\rvx_{\text{brt}} = \rvx \alpha_{\text{brt}}
\end{equation}
when $\alpha_{\text{brt}}$ is a scale factor.  If we write $\rvx = \rvr,\rvg,\rvb$, for the three colour channels of $\rvx$, then greyscale conversion of $\rvx$ is given by
\begin{equation}
	\rvx_{\text{gs}} = 0.299 \rvr + 0.587 \rvg + 0.114 \rvb.
\end{equation}
Adjusting the saturation of $\rvx$ is a linear combination of $\rvx$ and $\rvx_{\text{gs}}$, the greyscale version of $\rvx$
\begin{equation}
	\rvx_{\text{sat}} = \rvx  \alpha_{\text{sat}} + \rvx_{\text{gs}}  (1 - \alpha_{\text{sat}})
\end{equation}
when $\alpha_{\text{sat}}$ is a scale factor. Adjusting the contrast of $\rvx$ is a linear combination of $\rvx$ and $\text{mean}(\rvx_{\text{gs}})$, which the mean over all spatial dimensions of $\rvx_{\text{gs}}$. With a scaling parameter $\alpha_{\text{con}}$ we have
\begin{equation}
	\rvx_{\text{con}} = \rvx \alpha_{\text{con}} + \text{mean}(\rvx_{\text{gs}}) (1 - \alpha_{\text{con}}) .
\end{equation}
We utilize a linear approximation for hue adjustment. We perform hue adjustment by converting to the YIQ colour space, and then applying rotation on the IQ components. The transformation between RGB and YIQ colour space is given by the following linear transformation
\begin{equation}
\begin{pmatrix}
	Y \\
	I \\
	Q
\end{pmatrix} 
= 
\begin{pmatrix}
0.299 & 0.587 & 0.114\\
0.5959 & -0.2746 & -0.3213 \\
0.2115 & -0.5227 & 0.3112
\end{pmatrix}
\begin{pmatrix}
	R \\
	G \\
	B
\end{pmatrix}  
= T_{YIQ} \begin{pmatrix}
	R \\
	G \\
	B
\end{pmatrix}  
\end{equation}
Note that the $Y$ component is exactly the greyscale version $\rvx_{\text{gs}}$ defined above. We transform YIQ back to RGB by 
\begin{equation}
	\begin{pmatrix}
		R \\
		G \\
		B
	\end{pmatrix} 
	= 
	\begin{pmatrix}
		1 & 0.956 & 0.619\\
		1 & -0.272 & -0.647 \\
		1 & -1.106 & 1.703
	\end{pmatrix}
 	\begin{pmatrix}
 	Y \\
 	I \\
 	Q
 \end{pmatrix}
= T_{RGB} 	\begin{pmatrix}
	Y \\
	I \\
	Q
\end{pmatrix}
\end{equation}
In YIQ format, we can adjust hue of an image by $\theta = 2\pi\alpha_{\text{hue}}$ by multiplying with a rotation matrix
\begin{equation}
	R_{\theta} = 	\begin{pmatrix}
		1 & 0 & 0\\
		0 & \cos\theta & -\sin\theta \\
		0 & \sin\theta & \cos\theta
	\end{pmatrix}
\end{equation}
Therefore, our hue adjustment is given by
\begin{align}
		\rvx_{\text{hue}} &= T_{RGB} R_{\alpha_{\text{hue}}} T_{YIQ} \rvx 
\end{align}
where the matrices operate on the three colour channels of $\rvx$ and in parallel over all spatial dimensions. Each operation is followed by pointwise clipping of pixel values to the range $[0, 1]$.

\subsection{Set-up}
Our set-up is quite similar to the setup in \cite{chen2020simple} with two main differences: we treat colour distortions as a differentiable module while in \cite{chen2020simple} the transformation was performed in the preprocessing step, and we add the gradient penalty term in addition to the original loss in \cite{chen2020simple}. 

\subsubsection{Transformations}
First, for a batch $\rvx_1, ..., \rvx_K$ of inputs, we form a pair of $(\rvx_1, \rvx_1'), ..., (\rvx_K, \rvx_K')$ by applying two random transformations: random resized crop and random horizontal flip for each input. We then apply our differentiable colour distortion function which is composed of random colour jitter with probability $p = 0.8$ and random greyscale with probability $p = 0.2$. (Colour jitter is the composition of adjusting brightness, adjusting contrast, adjusting saturation, adjusting hue in this order.) We sample $\rvalpha$, the parameter that controls how strong the adjustment is for each image from the following distributions: brightness, contrast and saturation adjustment parameters from $U(0.6,1.4)$ and hue adjustment parameter from $U(-0.1,0.1)$. We call the resultant pairs $(\rvx_1, \rvx_1'), ..., (\rvx_K, \rvx_K')$.	

\begin{table}[t]
	\centering
	\begin{tabular}{lrr}
		
		Parameter           & CIFAR           & Spirograph       \\
		\hline
		Encoder model       & ResNet50         & ResNet18         \\
		Training batch size & 512              & 512              \\
		Training epochs     & 1000             & 50               \\
		Optimizer           & LARS             & LARS             \\
		Scheduler           & Cosine annealing & Cosine annealing \\
		Learning rate       & 3                & 3                \\
		Momentum            & 0.9              & 0.9              \\
		Temperature $\tau$  & 0.5              & 0.5             
	\end{tabular}
	\caption{Hyperparameters used for CIFAR-10, CIFAR-100 and Spirograph}
	\label{table: con_learn_param}
\end{table}

\subsubsection{Contrastive learning}
Similar to \cite{chen2020simple}, we use the transformed $(\rvx_1, \rvx_1'), ..., (\rvx_K, \rvx_K')$ as an input to an encoder to learn a pair of representations $(\rvz_1, \rvz_1'), ..., (\rvz_K, \rvz_K')$. The final loss function that we use for training is \eqref{eq:full-loss}. 
Table~\ref{table: con_learn_param} shows all hyperparameters that were used for training. The small neural network $\vg_\phi$ is a MLP with the two layers consisting of a fully connected linear map, ReLU activation and batch normalization. We use LARS optimizer \citep{you2017large} and apply cosine annealing \citep{loshchilov2016sgdr} to the learning rate.

\subsubsection{Gradient regularization}
\begin{table}[t]
	\centering
	\begin{tabular}{lrr}
		Parameter                   & CIFAR-10    & Spirograph  \\
		\hline
		$L$ & 100         & 100         \\
		$\lambda$ & 0.1         & 0.01      \\
		Clip value             & 1           & 1000        \\
	\end{tabular}
	\caption{Hyperparameters for gradient penalty calculation}
	\label{table: gp hyperparam}
\end{table}

In this part, we explain our setup for calculating the gradient penalty as in equation \ref{eq:penalty}.
We sample a random vector $\rve$ with independent Rademacher components and independently for each sample in the batch. 
We generate $L$ samples of $\rvalpha$ for each element of the batch to compute the regularizer. Finally, we clip the penalty from above to prevent instability at the onset of training. In practice, this meant the gradient regularization was not enforced for about the first epoch of training.
Table \ref{table: gp hyperparam} shows hyperparameters that we used within gradient penalty calculation.

\subsubsection{Evaluation}
\begin{table}[t]
	\centering
	\begin{tabular}{lll}
		Parameter                   & CIFAR-10              & Spirograph         \\
		\hline
		Evaluation model            & Linear classification & Linear regression  \\
		Evaluation loss             & Cross entropy loss    & Mean squared error \\
		Weight decay                & $10^{-5}$ & $10^{-8}$ \\
		Optimization        & L-BFGS 500 steps       & L-BFGS 500 steps   
	\end{tabular}
	\caption{Hyperparameters for model evaluation}
	\label{table: eval hyperparam}
\end{table}

We use our representations as features in linear classification and regression tasks. We train these linear models with L-BFGS with hyperparameters as shown in Table \ref{table: eval hyperparam} on the training set and evaluate performance on the test set.

\subsubsection{MoCo v2}
To empirically demonstrate that our ideas transfer to alternative base contrastive learning methods, we applied both gradient regularization and feature averaging to the MoCo v2 \citep{chen2020mocov2} base set-up. We also explored two different ResNet \citep{he2016deep} architectures. We closely followed the MoCo v2 implementation at \url{https://github.com/facebookresearch/moco}. As for SimCLR, we adapted the transformations to be a differentiable module. We also made adaptations for CIFAR-100 in an identical way as in our previous experiments. As in MoCo v2, we removed batch normalization in the projection head $\rvg_\phi$; we used SGD optimization with learning rate $0.06$ for a batch size of 512, and used the MoCo parameters $K=2048$ and $m=0.99$ for ResNet18 and $K=4096, m=0.99$ for ResNet50. We did not conduct extensive hyperparameter sweeps, but we did investigate larger values of $K$ which did not lead to improved performance on CIFAR-100. (In particular, the original settings $K=65536, m=0.999$ appeared to perform less well on this dataset.) Other hyperparameters and settings were identical to \citet{chen2020mocov2}. We did 3 independent runs with a ResNet18 and 2 runs with a ResNet50. We conducted linear classification evaluation with fixed representations in exactly the same way as for our other experiments. Feature averaging results used $M=40$.

\subsubsection{Computational cost}
We found that gradient regularization increased the total time to train encoders by a factor of \emph{at most 2}. For feature averaging at test time with a fixed dataset, the computation of features $\rvz^{(M)}$ is an $O(M)$ operation, whilst the training and testing of the linear classifier is $O(1)$. Training time remained by far the larger in all experiments by orders of magnitude.

\subsection{Additional experimental results}
\label{sec:additional}
\subsubsection{Comparison with ensembling}
\begin{figure}[t]
	\centering     
	\begin{subfigure}{.31\textwidth}{\includegraphics[width=\textwidth]{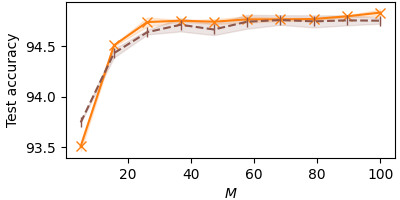}\caption{Accuracy}}\end{subfigure}~
	\begin{subfigure}{.31\textwidth}{\includegraphics[width=\textwidth]{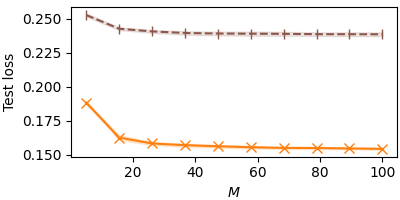}\caption{Cross-entropy loss}}\end{subfigure}~
	\begin{subfigure}{.21\textwidth}{\vspace{-20pt}\includegraphics[width=\textwidth]{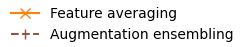}}\end{subfigure}
	\vspace{-4pt}
	\caption{A comparison between feature averaging and augmentation ensembling using representations obtained with gradient regularization on CIFAR-10. Error bars are 1 s.e. from 3 runs.}
	\label{fig:ensemble}
\end{figure}

\begin{figure}[t]
	\centering     
	\begin{subfigure}{.31\textwidth}{\includegraphics[width=\textwidth]{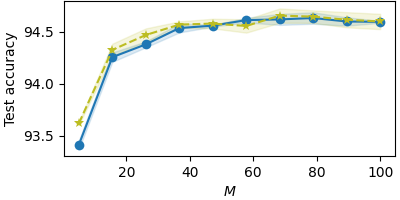}\caption{Accuracy}}\end{subfigure}~
	\begin{subfigure}{.31\textwidth}{\includegraphics[width=\textwidth]{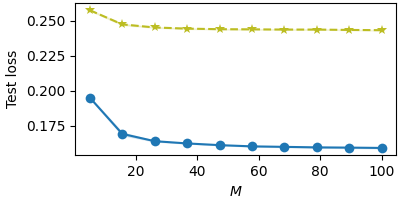}\caption{Cross-entropy loss}}\end{subfigure}~
	\begin{subfigure}{.21\textwidth}{\vspace{-20pt}\includegraphics[width=\textwidth]{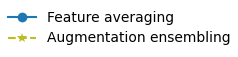}}\end{subfigure}
		\vspace{-4pt}
	\caption{A comparison between feature averaging and augmentation ensembling using representations obtained with SimCLR on CIFAR-10. Error bars are 1 s.e. from 3 runs.}
	\label{fig:ensemble2}
\end{figure}

Feature averaging is an approach that bears much similarity with ensembling. To experimentally compare these two approaches, we applied both approaches to encoders trained on CIFAR-10. To provide a suitable comparison with feature averaging using $\rvz^{(M)}$ we first a trained a linear classifier $p(y|\rvz)$ using an $M$-fold augmented dataset of representations with a standard cross-entropy loss using L-BFGS optimization using the same weight decay as for feature averaging. For CIFAR-10, which has a training set of length $50000$, the feature averaging classifier was trained using $50000$ averaged representations, whereas the ensembling classifier was trained with $50000M$ examples using data augmentation. At test time, we averaged prediction probabilities using $M$ different representations of each test image. Specifically, if $p(y|\rvz)$ is the classifier trained by the aforementioned procedure and $\rvalpha_1,...,\rvalpha_M\sim p(\rvalpha),\rvbeta_1,...,\rvbeta_M\sim p(\rvbeta)$ are independent transformation parameters, the probability of assigning label $y$ to input $\rvx$ is given by
\begin{equation}
	p_\text{ensemble}(y|\rvx) = \frac{1}{M} \sum_{m=1}^M p(y|\rvf_\theta(\rvt(\rvx, \rvalpha_m,\rvbeta_m))).
\end{equation}
The results outlined in Figure \ref{fig:ensemble} show that ensembling gives very similar performance to feature averaging in terms of accuracy, but is significantly worse in terms of loss. We can understand this result intuitively because ensembling includes probabilities from every transformed version of the input (including where the classifier is uncertain or incorrect) whereas feature averaging combines transformations in representation space and uses only one forward pass of the classifier. More formally, the difference in test loss makes sense in light of Theorem~\ref{thm:convex}. Figure~\ref{fig:ensemble2} shows additional results obtained using representations trained with standard SimCLR on CIFAR-10. We see the same pattern---a similar test accuracy but worse test loss when using augmentation ensembling.

\subsubsection{Gradient regularization leads to strongly invariant representations}

\begin{table}[t]
	\caption{The test loss when a linear regression model is used to predict $\rvalpha$ from $\rvz$ on CIFAR-100. The reference value is $\text{Mean}_i\, \Var(\rvalpha_i)$. We present the mean and $\pm$ 1 s.e. from 3 runs.}
	\centering
	\vspace{-2pt}
	\begin{tabular}{rl}
		&  Test loss  \\
		\hline
		No regularization &  0.0357  $\pm$ 0.0003 \\
		Regularization &  0.0417 $\pm$ 0.00007 \\
		Reference value &  0.0408
	\end{tabular}
	\label{tab:invariance-cifar100}
	\vspace{-4pt}
\end{table}
\begin{table}[t]
	\caption{Invariance metrics for Spirograph. We present the conditional variance, and the test loss when a linear regression model is used to predict $\rvalpha$ from $\rvz$. The reference value is $\text{Mean}_i\, \Var(\rvalpha_i)$. We present the mean and $\pm$ 1 s.e. from 3 runs.}
	\centering
	\begin{tabular}{rll}
		&  Conditional variance & Test loss  \\
		\hline
		No regularization &  0.789 $\pm$ 0.0069 & 0.0751  $\pm$ 0.0003 \\
		Regularization &  0.0016 $\pm$ 0.00004 & 0.0808 $\pm$ 0.00009 \\
		Reference value &  - & 0.0806
	\end{tabular}
	\label{tab:invariance-spirograph}
	\vspace{-4pt}
\end{table}

\begin{wrapfigure}{r}{0.4\columnwidth}
	\centering
	\includegraphics[width=0.3\textwidth]{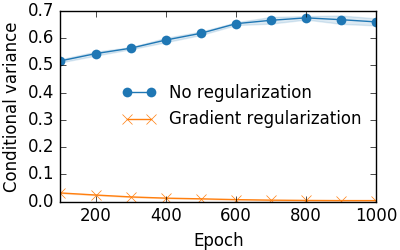}
	\caption{
		The conditional variance of Equation~\ref{eq:conditional-variance} on CIFAR-100. Error bars represent 1 s.e. from 3 runs.
	}
	\label{fig:condvar-cifar100}
\end{wrapfigure}
We first show that our gradient penalty successfully learns representations that have greater invariance to transformation than their counterparts generated by contrastive learning. We consider two metrics: the conditional variance targetted directly by the gradient regularizer, and the loss when $\rvz$ is used to predict $\rvalpha$ with linear regression.
Table~\ref{tab:invariance-cifar100} and Figure~\ref{fig:condvar-cifar100} are the equivalents of Table~\ref{tab:invariance} and Figure~\ref{fig:condvar} for CIFAR-100, showing the conditional variance and the regression loss for predicting $\rvalpha$ respectively.
In Table~\ref{tab:invariance-spirograph} we present the same results for Spirograph.
We see very similar results to CIFAR-10 in both cases---the gradient penalty dramatically reduces conditional variances, and prediction of $\rvalpha$ by linear regression gives a loss that is better than a constant prediction only for standard contrastive representations.

\subsubsection{Gradient regularized representations perform better on downstream tasks and are robust to test time transformation}

For downstream performance on Spirograph, we evaluate the performance of encoders trained with gradient regularization and without gradient regularization on the task of predicting the generative parameters of interest. In our set-up, we use 100,000 train images and 20,000 test images and train the encoders on the training set for 50 epochs. For evaluation, we train a linear regressor on the representations from encoders to predict the actual generative parameters. Setting for the linear regressor is shown in the Table \ref{table: eval hyperparam}. To accompany the main results in Figure~\ref{fig:downstream}(c), we include the exact values used in this figure in Table~\ref{tab:spirograph_bar}.

\begin{table}[b]
	\caption{The raw values used to produce Figure~\ref{fig:downstream}(c). Each of the downstream tasks is a generative parameters. 
	Values are the test mean square error $\pm$ 1 s.e. from 3 runs.}
	\begin{center}
		\begin{tabular}{lrr}
			Parameters & No regularization  & Gradient regularization  \\
			\hline
			$m$ & $0.0006773 \pm 0.0000509$ & $\textbf{0.0005073} \pm \textbf{0.0000078}$\\
			$b$ & $0.0112480 \pm 0.0002555$ & $\textbf{0.0073607} \pm \textbf{0.0001079}$\\
			$\sigma$ & $0.0000914 \pm 0.0000108$& $\textbf{0.0000527} \pm \textbf{0.0000026}$\\
			$f_r$ &$0.0000232 \pm 0.0000004$ & $\textbf{0.0000028} \pm \textbf{0.0000001}$\\
			\hline
		\end{tabular}
	\end{center}
	\label{tab:spirograph_bar}
\end{table}

We now turn to our experiments used to investigate robustness---we investigate scenarios when we change the distribution of transformation parameters $\rvalpha$ at test time, but use encoders that were trained with the original distribution. We investigate on both CIFAR and Spirograph datasets.

\begin{figure}[t]
	\centering     %%% not \center
	\begin{subfigure}{.15\textwidth}{\includegraphics[width=\textwidth]{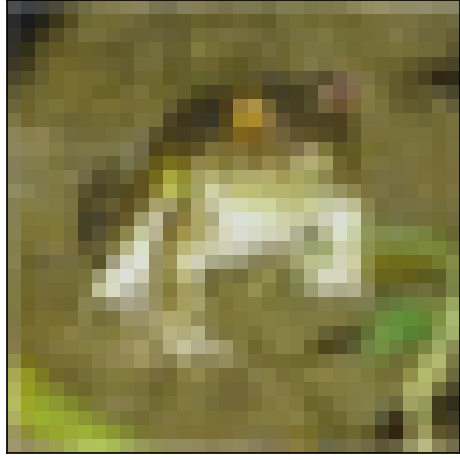}\caption{Scale $0.1$}}\end{subfigure}
	\begin{subfigure}{.15\textwidth}{\includegraphics[width=\textwidth]{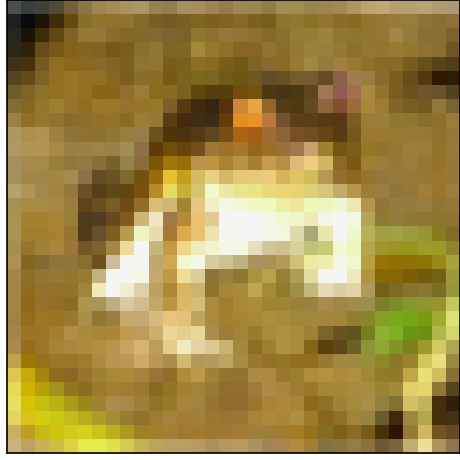}\caption{Scale $0.2$}}\end{subfigure}
	\begin{subfigure}{.15\textwidth}{\includegraphics[width=\textwidth]{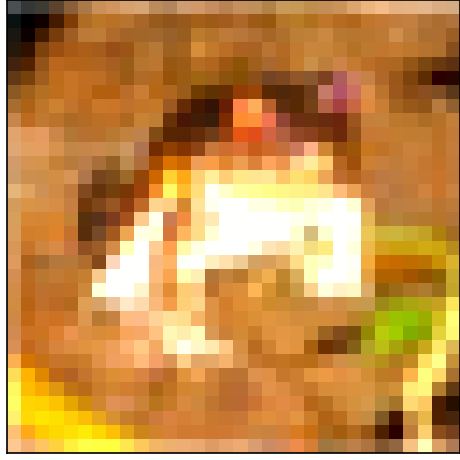}\caption{Scale $0.3$}}\end{subfigure}
	\begin{subfigure}{.15\textwidth}{\includegraphics[width=\textwidth]{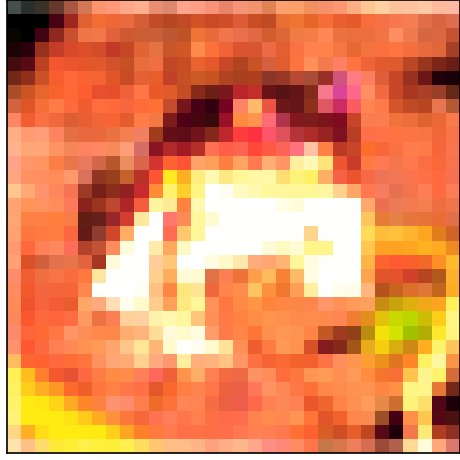}\caption{Scale $0.4$}}\end{subfigure}
	\vspace{-4pt}
	\caption{Visualization of test time distortions applied to CIFAR-10 for various variance scalings.}
	\label{fig:vis-cifar}
\end{figure}
\begin{figure}[t]
	\centering     %%% not \center
	\begin{subfigure}{.15\textwidth}{\includegraphics[width=\textwidth]{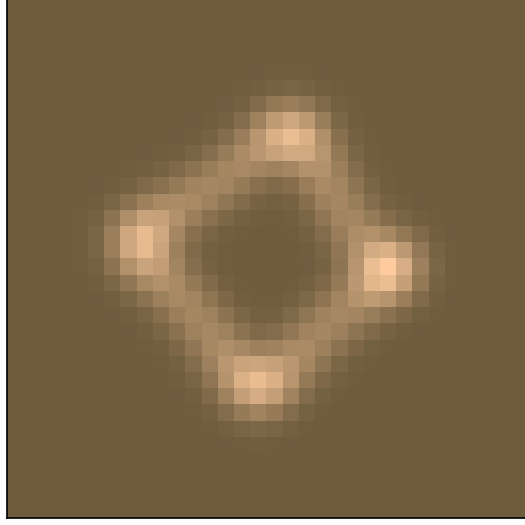}\caption{$h = 0.5$}}\end{subfigure}
	\begin{subfigure}{.15\textwidth}{\includegraphics[width=\textwidth]{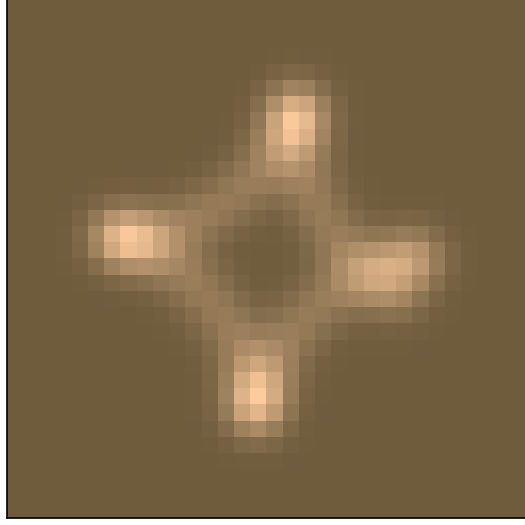}\caption{$h = 1.0$}}\end{subfigure}
	\begin{subfigure}{.15\textwidth}{\includegraphics[width=\textwidth]{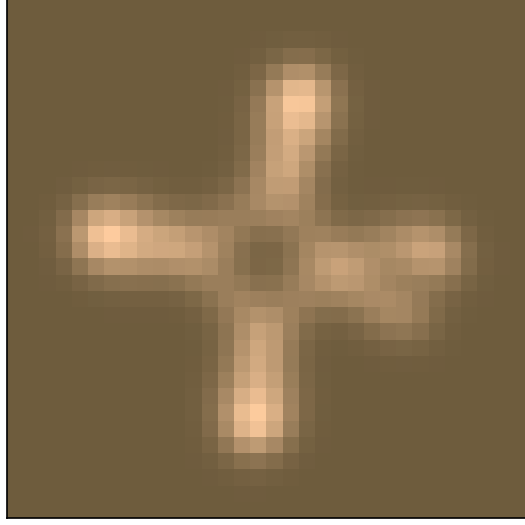}\caption{$h = 1.5$}}\end{subfigure}
	\begin{subfigure}{.15\textwidth}{\includegraphics[width=\textwidth]{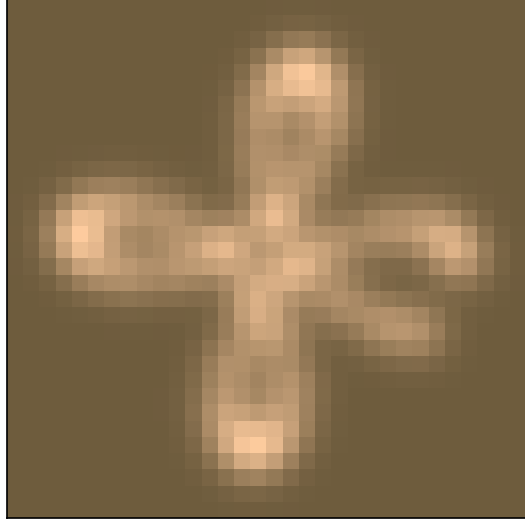}\caption{$h = 2.0$}}\end{subfigure}
	\vspace{-4pt}
	\caption{The effect of varying $h$ on the structure of a Spirograph image.}
	\label{fig: vary h sg}
\end{figure}	
\begin{figure}[t!]
	\centering
	\begin{subfigure}{.15\textwidth}{\includegraphics[width=\textwidth]{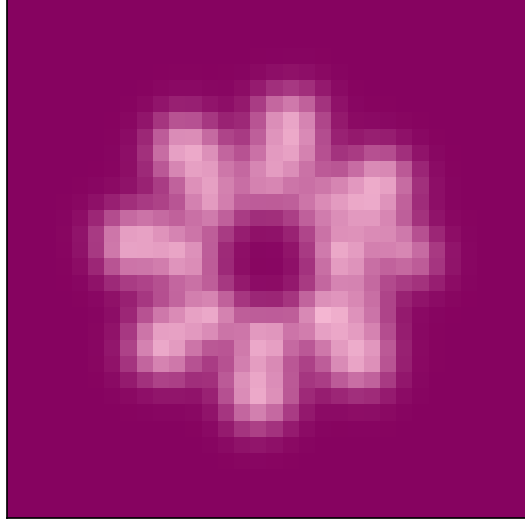}\caption{Shift $0$}}\end{subfigure}
	\begin{subfigure}{.15\textwidth}{\includegraphics[width=\textwidth]{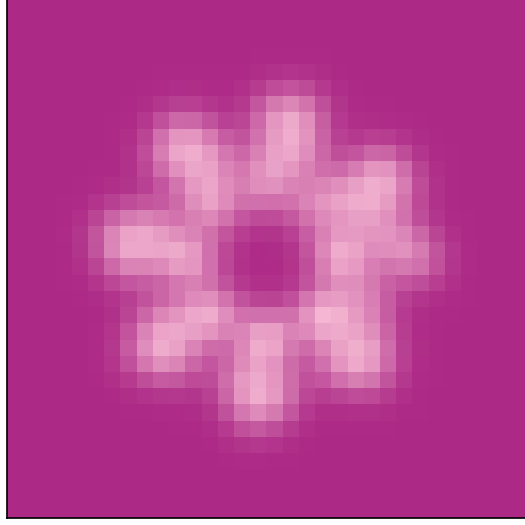}\caption{Shift $0.15$}}\end{subfigure}
	\begin{subfigure}{.15\textwidth}{\includegraphics[width=\textwidth]{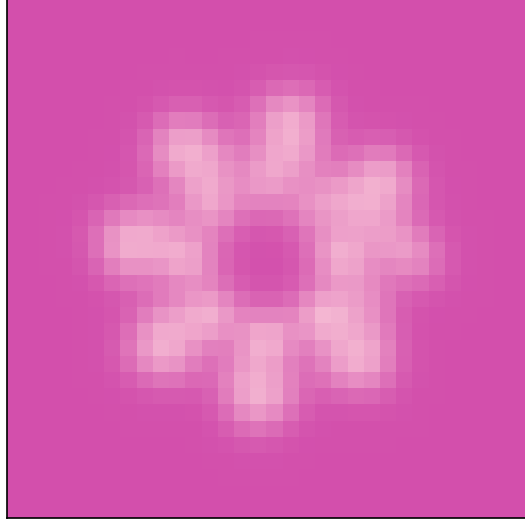}\caption{Shift $0.30$}}\end{subfigure}
	\begin{subfigure}{.15\textwidth}{\includegraphics[width=\textwidth]{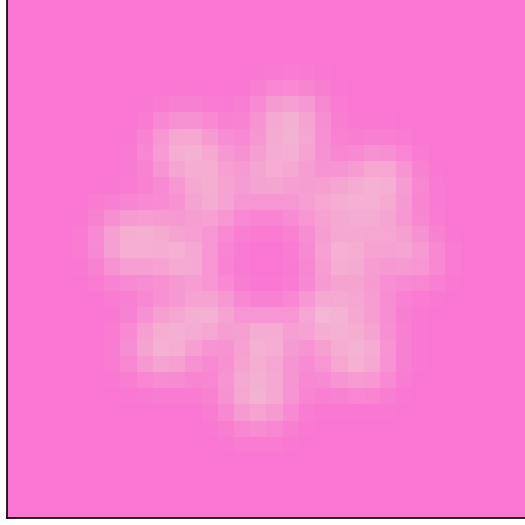}\caption{Shift $0.45$}}\end{subfigure}
	\vspace{-4pt}
	\caption{The effect of shifting the background colour distribution used for Spirograph images.}
	\label{fig: vary bg sg}
\end{figure}
\begin{wrapfigure}{r}{0.45\columnwidth}
	\vspace{-8pt}
	\centering  
	\includegraphics[width=0.32\textwidth]{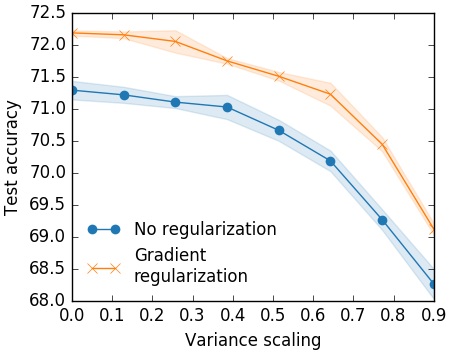}
	\caption{
		Robustness of performance on CIFAR-100 under variance scaling of transformation parameters.
	}
	\label{fig:robust-cifar100}
\end{wrapfigure}
For CIFAR, we chose to vary the distribution of parameters for colour distortions at test time. We could write the distribution of parameter of brightness, saturation, contrast as $U(1-0.8S, 1+0.8S)$
and the distribution of hue as $U(-0.2S, 0.2S)$ where $S$ is a parameter controlling the strength of the distortion. In the original setup, we have $S = 0.5$. By varying the value of $S$ used at test time, we can increase the variance of the nuisance transformations, including stronger transformations than those that were present when encoders were trained. This is visualized in Figure~\ref{fig:vis-cifar}. Figure~\ref{fig:robust-cifar100} is a companion plot for Figure~\ref{fig:robustness}(a) applied on CIFAR-100. We see broadly similar trends---our representations outperform those from standard contrastive learning across a range of test time distributions.

For robustness on Spirograph recall $\rvalpha=(h,f_g,f_b,b_r,b_g,b_b)$ when $ h \sim U(0.5, 2.5)$, $f_g,f_b \sim U(0.4, 1)$, $b_r,b_g,b_b \sim U(0, 0.6)$. We chose to vary the distribution of the background colour $(b_r,b_g,b_b)$ and the distribution of $h$ which is a structure-related transformation parameter. We consider two approaches, \textit{mean shifting} where we shift the uniform distribution by $S$, for example $U(a,b) \to U(a+S,b+S)$ and we consider \textit{changing variance} where we increase the range of the uniform distribution by $2S$, for example, $U(a,b) \to U(a-S, b+S)$.  We compare performance of the trained encoders at epoch 50 on predicting the generative parameters  $(m, b, \sigma, f_r)$ and we use the same setting for linear regressors as in Table \ref{table: eval hyperparam}.

Figure \ref{fig: vary h sg} is a visualization of the effect of varying $h$ from $0.5$ to $2.0$ while other parameters are kept constant. The figure \ref{fig: vary bg sg} shows the effect of varying the background colour of an image by adding $S = 0.15, 0.30, 0.45$ to each of the background RGB channels.

For varying the distribution of $h$, we consider shifting the mean of $h \sim U(0.5, 2.5)$ by $S = \pm0.1, \pm0.3, \pm0.5$ and increasing the variance of $h$ by $S = 0.1, 0.3, 0.5$. For the distribution of the background colour $(b_r,b_g,b_b)$, we consider shifting the distribution of $(b_r,b_g,b_b)$ by $S = 0.1, 0.2, 0.3,0.4$ and increasing variance by the same amount. We note that $(b_r,b_g,b_b)$ controls the background colour of an image, so we are varying the $3$ distributions at the same time. Since, the foreground colour has the distribution $f_r,f_g,f_b \sim U(0.4, 1)$, we are shifting the distribution of $(b_r,b_g,b_b)$ toward $(f_r,f_g,f_b)$ and this will make the background and foreground colours more similar. For example, with $S = 0.4$, when we apply a mean shift we change the distribution of $(b_r,b_g,b_b)$ to $b_r,b_g,b_b \sim U(0.4, 1)$, and when we increase the variance the distribution becomes $b_r,b_g,b_b \sim U(0, 1)$.

\end{document}